\renewcommand{\dot}[2]{\langle #1,#2\rangle}
\newcommand{\mindot}{\mathop{\mathrm{min.}{}}}
\newif\ifnotes\notesfalse
\definecolor{mygrey}{gray}{0.50}
\newcommand{\notename}[2]{{\textcolor{red}{\footnotesize{\bf (#1:} {#2}{\bf ) }}}}
\newcommand{\notename}[2]{{}}
\renewcommand{\P}{\mathbb{P}}
\newtheorem{theorem}{Theorem}
\newtheorem*{proposition*}{Proposition}
\newtheorem{lemma}{Lemma}
\newtheorem*{theorem*}{Theorem}
\begin{document}

% If your paper is accepted and the title of your paper is very long,
% the style will print as headings an error message. Use the following
% command to supply a shorter title of your paper so that it can be
% used as headings.
%
%\runningtitle{I use this title instead because the last one was very long}

% If your paper is accepted and the number of authors is large, the
% style will print as headings an error message. Use the following
% command to supply a shorter version of the authors names so that
% they can be used as headings (for example, use only the surnames)
%
%\runningauthor{Surname 1, Surname 2, Surname 3, ...., Surname n}

\twocolumn[

\icmltitle{Graph cuts always find a global optimum for Potts models (with a catch)}

\begin{icmlauthorlist}
\icmlauthor{Hunter Lang}{mit}
\icmlauthor{David Sontag}{mit}
\icmlauthor{Aravindan Vijayaraghavan}{nw}
\end{icmlauthorlist}

\icmlaffiliation{mit}{MIT CSAIL, Cambridge MA, USA}
\icmlaffiliation{nw}{Northwestern University, Evanston IL, USA}

\icmlcorrespondingauthor{Hunter Lang}{hjl@mit.edu}

% You may provide any keywords that you
% find helpful for describing your paper; these are used to populate
% the "keywords" metadata in the PDF but will not be shown in the document
\icmlkeywords{Machine Learning, ICML}

\vskip 0.3in
]

% this must go after the closing bracket ] following \twocolumn[ ...

% This command actually creates the footnote in the first column
% listing the affiliations and the copyright notice.
% The command takes one argument, which is text to display at the start of the footnote.
% The \icmlEqualContribution command is standard text for equal contribution.
% Remove it (just {}) if you do not need this facility.

\printAffiliationsAndNotice{}

\newcommand{\argmax}{\mathop{\mathrm{arg\,max}{}}}
\newcommand{\argmin}{\mathop{\mathrm{arg\,min}{}}}
\newcommand{\maximize}{\mathop{\mathrm{maximize}{}}}
\newcommand{\minimize}{\mathop{\mathrm{minimize}{}}}
\newcommand{\E}{\mathbb{E}}
\newcommand{\naive}{naive}
\renewcommand{\Pr}{\mathbb{P}}

\begin{abstract}
  We prove that the $\alpha$-expansion algorithm for MAP inference
  \emph{always} returns a globally optimal assignment for
  Markov Random Fields with Potts pairwise potentials, \emph{with a
    catch}: the returned assignment is only guaranteed to be optimal for an instance within a small perturbation of the original problem instance.
  In other words, all local minima with respect to expansion moves are
  global minima to slightly perturbed versions of the problem.
  On ``real-world'' instances, MAP assignments of small perturbations of the problem should be very similar to the MAP assignment(s) of the original problem instance.
  We design an algorithm that can certify whether this is the case in practice.
  On several MAP inference problem instances from computer vision, this algorithm certifies that MAP solutions to \emph{all} of these perturbations are very close to solutions of the
  original instance. These results taken together give a cohesive
  explanation for the good performance of ``graph cuts'' algorithms in
  practice. Every local expansion minimum is a global minimum in a small perturbation of the problem, and all of these global minima are close to the original solution. 
\end{abstract}

\section{Introduction}
\label{sec:intro}
Markov random fields are widely used for structured prediction in computer vision tasks such as image segmentation and stereopsis \citep{geman1984stochastic}, including in the modern ``deep'' era
\citep[e.g.,][]{zheng2015conditional}. Making predictions involves performing MAP inference. However, in general, exactly solving the MAP inference problem is NP-hard \citep{wainwright2008graphical} and one must resort to approximate inference.

``Graph cuts'' algorithms for approximate MAP inference in pairwise Markov random
fields have been very influential in computer vision. 
These algorithms are popular because they are simple and efficient,
and they return very high-quality
 solutions in practice \citep{szeliski2008comparative, kappes2015comparative}.
The \emph{$\alpha$-expansion} method of \citet{BoyVekZab01} starts with an arbitrary initial
labeling (an assignment of labels to variables), then iteratively makes ``expansion moves'' to improve the current labeling. At each step, the optimal expansion move of the current labeling can be computed very
efficiently by solving a binary minimum cut problem (hence the name
``graph cuts'').
The algorithm converges when no expansion moves can improve the labeling any further.

Algorithm~\ref{alg:expansion} summarizes the high-level algorithm steps. 
The $\alpha$-expansion algorithm is only guaranteed to return a \emph{local minimum} with respect to the moves made by the algorithm.
Figure \ref{fig:results} shows a globally optimal (MAP) labeling, which took over
four hours to obtain with an integer linear programming (ILP) solver,
and the local minimum returned by $\alpha$-expansion in less
than ten seconds. 
Although $\alpha$-expansion is only guaranteed to find a local minimum, the two assignments agree on over 99\% of the vertices.

Despite this good practical performance, the sharpest worst-case
theoretical guarantee for $\alpha$-expansion is that it obtains a
2-approximation to the objective value of the MAP labeling \citep{BoyVekZab01}. 
A 2-factor objective approximation often translates to a very weak guarantee for recovering the exact solution:
\citet{LanSonVij19} show that MAP inference problems from computer vision admit 2-approximate labelings that agree with the optimal assignment on fewer than 1\% of variables. Compare this to Figure \ref{fig:results}, where the expansion solution agrees with the exact solution on over 99\% of the nodes. 
Additionally, objective gap bounds obtained from primal-dual variants of $\alpha$-expansion are sometimes very close to one in practice \citep{komodakis2007fast}. 
Those bounds (which depend on the algorithm's initialization) show that graph-cuts algorithms
often vastly outperform their theoretical guarantees.
So a large gap exists between the worst-case guarantee of 2-approximation and the practical performance (in both objective value and recovery of the true solution) of the $\alpha$-expansion algorithm. 
\emph{Why are the local minima with respect to expansion moves so good in practice?}

\begin{algorithm}[t]
  \caption{$\alpha$-expansion algorithm}
  \label{alg:expansion}
  \begin{algorithmic}
  \STATE Initialize a labeling $x: V \to [k]$.
  
  \STATE {\tt improved} $\gets$ {\tt True}.
  
  \WHILE{\texttt{improved}}
  
  \STATE {\tt improved} $\gets$ {\tt False}
  
  \FOR{$\alpha \in [k]$}

    \STATE $x^{\alpha}\gets$ optimal $\alpha$-expansion of $x$.
    
    \IF{$obj(x^{\alpha}) < obj(x)$}
    \STATE $x\gets x^{\alpha}$.
    \STATE \texttt{improved} $\gets$ {\tt True}.
    \ENDIF
    \ENDFOR
    \ENDWHILE
  \STATE {\bfseries return} $x$.
\end{algorithmic}
\end{algorithm}
\begin{figure}[t]
\centering
\begin{subfigure}{.33\linewidth}
\centering
\includegraphics[width=\linewidth,height=60pt]{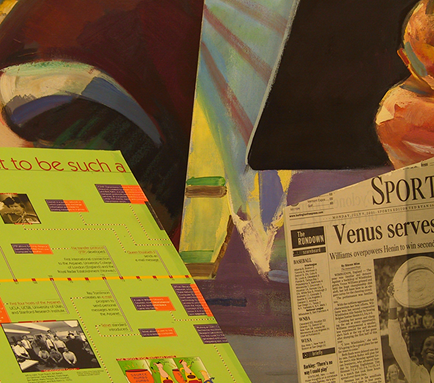}
\end{subfigure}%
\begin{subfigure}{.33\linewidth}
\centering
\includegraphics[width=\linewidth,height=60pt]{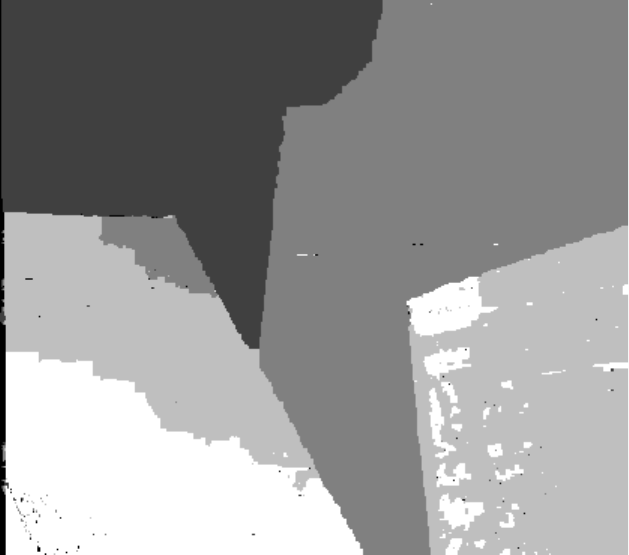}
\end{subfigure}%
\begin{subfigure}{.33\linewidth}
\centering
\includegraphics[width=\linewidth,height=60pt]{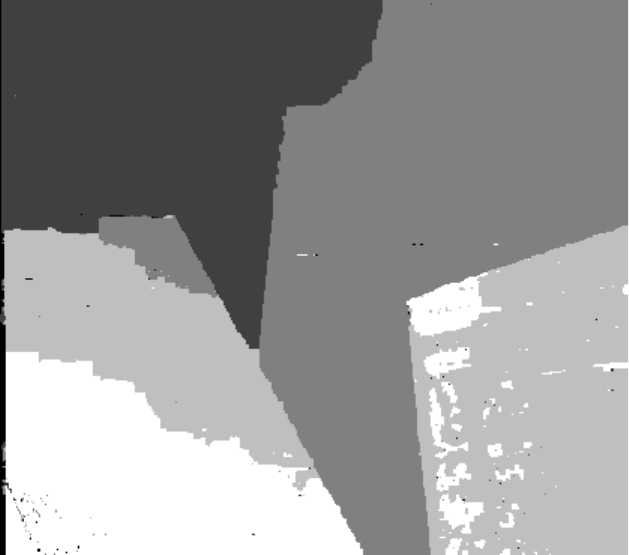}
\end{subfigure}
\caption{Left: image of {\tt venus} scene. Center: exact MAP depth labeling, found using ILP solver; Left: a local minimum w.r.t. expansion moves. The two labelings agree on over 99\% of nodes.}\label{fig:results}
\vspace{-1.45em}
\end{figure}

In this work, we prove a surprising structural result that
characterizes the local energy minima with respect to (w.r.t.) expansion moves.
Informally, we prove that for a widely used model (the \emph{Potts model}) all local expansion minima are actually \emph{global} minima of slightly
perturbed instances of the input problem. 
This result implies that the $\alpha$-expansion algorithm \emph{always} returns a MAP assignment---the ``catch'' is that the assignment is \textit{not} guaranteed to be optimal for the input instance, but rather for some closely-related instance.
In other words, we prove that when we run $\alpha$-expansion on an instance $I$, it always outputs a MAP solution to an instance $I'$ that is a small perturbation of $I$.

Our result implies that real-world instances should have no ``spurious'' local minima with respect to expansion moves.
This is because in practice, MAP solutions to all small perturbations of the problem instance should be very close to the MAP solution of the original instance.
We design an efficient algorithm to check whether this is truly the case. 
On real-world instances of MAP inference from computer vision, our algorithm certifies that all solutions to these perturbations are very close to the solution of the original instance. 
Our results thus give a theoretical explanation for the excellent empirical performance of $\alpha$-expansion and related graph cut methods like FastPD \citep{komodakis2007fast}. 
These algorithms naturally take advantage of the fact that solutions to all small perturbations tend to be close to the original solution in practice.

\section{Preliminaries}
\label{sec:prelim}

Before we discuss related work, we formally introduce the inference problem considered in this paper and fix notation.
Fix a constant $k$ and a graph $G=(V,E)$ with $|V|=n$, $|E| = m$. A \emph{labeling} of $G$ is a map $g: V\to [k]$. 
The (pairwise) MAP inference problem on $G$ can be written:
\begin{align*}
  \minimize_{g: V\to [k]}\sum_{u\in V}\theta_u(g(u)) + \sum_{(u,v) \in E}\theta_{uv}(g(u),g(v)).
\end{align*}
In this \emph{energy minimization} format, $\theta_u(i)$ is the
\emph{node cost} of assigning label $i \in \{1,\ldots, k\}$ to node
$u$, and $\theta_{uv}(i,j)$ is the \emph{edge cost} or \emph{pairwise energy} of simultaneously
assigning label $i$ to $u$ and $j$ to $v$. 
We assume without loss of generality that $\theta_u(i) \ge 0$ for all $(u,i)$.
Consider image segmentation: the nodes $u \in V$ correspond to image pixels,
and the edges $(u,v)\in E$ connect nearby pixels. The node costs
$\theta_u(i)$ can be set as the negative score of pixel $u$ for
segment $i$, and the pairwise terms $\theta_{uv}(i,j)$ can be set to
encourage nearby pixels to belong to the same
segment.

We can identify each labeling $g: V\to L$ with a point $x^g \in \{0,1\}^{nk + mk^2}$ defined by the following indicator functions:
\begin{align*}
x^g_u(i) &= \begin{cases}
  1 & g(u) = i\\
  0 & \mbox{otherwise}.
\end{cases}\\
x^g_{uv}(i,j) &= \begin{cases}
  1 & g(u) = i,\ g(v) = j\\
  0 & \mbox{otherwise}.
  \end{cases}
\end{align*}
The MAP inference problem can then be written as:
\begin{align*}
  \mindot_{g: V\to [k]}\sum_{u\in V}\sum_{i\in [k]}\theta_u(i)x^g_u(i) + \sum_{uv \in E}\sum_{i,j}\theta_{uv}(i,j)x^g_{uv}(i,j).
\end{align*}
The \emph{marginal polytope} $M(G)$ is defined as the convex hull of all $x^g$:
\[
M(G) \triangleq \mbox{conv}\left(\{x^g |\ g: V\to [k]\}\right).
\]
We can denote the coordinates of an arbitrary point in $x\in M(G)$ as $x_u(i),\ x_{uv}(i,j)$,
with $u \in V$, $uv \in E$, and $i$ and $j$ in $[k]$.
Collecting the objective coefficients $\theta_u(i)$ and $\theta_{uv}(i,j)$ in the vector $\theta = (\theta_u : u;\ \theta_{uv} : uv) \in \mathbb{R}^{nk+mk^2}$, we can rewrite MAP inference as a linear optimization over $M(G)$:
\[
\minimize_{x\in M(G)}\ \langle\theta, x\rangle,
\]
since the vertices of this polytope are precisely the points $x^g$ corresponding to labelings of $G$.
$M(G)$ typically lacks an efficient description, and optimizing a linear function over it is NP-hard in general \citep{wainwright2008graphical}.
The minimization problem above can be represented as the integer linear program (ILP):
\begin{alignat}{2}
  \label{eqn:ILP}
  \minimize_{x}\ &\dot{\theta}{x}\qquad\qquad\qquad&&\\
  \text{subj. to:}\ & \sum_{i}x_{uv}(i,j) = x_v(j)\ \ \ \ && \forall (u,v)\in E,\ j\in [k]\nonumber\\
                    & \sum_{j}x_{uv}(i,j) = x_u(i)\ \ \ \ && \forall (u,v)\in E,\ i\in [k]\nonumber\\
                    & \sum_{i}x_u(i) = 1 &&\forall u\in V\nonumber\\
                    & x_{uv}(i,j) \in \{0,1\}&&\forall (u,v),\ (i,j)\nonumber\\
                    & x_{u}(i) \in \{0,1\}&& \forall u,\ i.\nonumber
\end{alignat}
The feasible points of this ILP are precisely the vertices of $M(G)$.
A common approximate approach to MAP inference is to solve the following linear programming (LP) relaxation:
\begin{equation}
    \label{eqn:local-lp}
    \minimize_{x \in L(G)}\ \langle\theta, x\rangle,
\end{equation}
where $L(G)$ is the \emph{local polytope} defined by relaxing all the integrality constraints above from $\{0,1\}$ to $[0,1]$.
In general, $M(G)$ is a strict subset of $L(G)$.
%\begin{alignat*}{2}
%  &\sum_{j} x_{uv}(i,j) = x_u(i) &&\qquad \forall (u,v) \in E, i \in [k]\\
%  &\sum_{i} x_{uv}(i,j) = x_v(j) &&\qquad \forall (u,v) \in E, j \in [k]\\
%  &\sum_{i} x_{u}(i) = 1 &&\qquad \forall u \in V\\
%  &x_{u}(i) \in [0,1] &&\qquad \forall u\in V,\ i\in [k]\\
%  &x_{uv}(i,j) \in [0,1] &&\qquad \forall (u,v) \in E, (i,j) \in [k]\times [k].\\
%\end{alignat*}
The first two sets of constraints are called \emph{marginalization
  constraints}, and ensure the edge variables $x_{uv}(i,j)$ locally
match the ``marginals'' $x_u(i)$ and $x_v(j)$. The third set consists of \emph{normalization} constraints that ensure the $x_u$ variables sum to 1. Note that there are some redundant constraints in this formulation.
In the LP relaxation, the variables $x_u(i)$, $x_{uv}(i,j)$ correspond to potentially \emph{fractional} labelings of $G$, since we only have that $\sum_i x_u(i) = 1$.

We refer to \eqref{eqn:local-lp} as the \emph{local LP relaxation} of the MAP inference problem. 
This relaxation has been widely studied \citep{sontag2010approximate, wainwright2008graphical}, including in the context of stability and the ferromagnetic Potts model \citep{KleinbergTardos02, LanSonVij18, LanSonVij19, LanRedSonVij21}. 
\fxnote{put refs in better context} Many algorithms for approximate MAP inference can be related to this relaxation (e.g., MPLP \citep{globerson2008fixing} performs coordinate ascent in its dual), but we only use it here as a tool in our analysis. 
We say that \eqref{eqn:local-lp} is \emph{tight} on an instance of the MAP inference problem if there exists a vertex of $M(G)$ that is a solution to \eqref{eqn:local-lp}. This optimal vertex must correspond to an exact MAP labeling.

In this work, we focus on the \emph{ferromagnetic Potts
  model}, where the pairwise terms $\theta_{uv}(i,j) =
 w_{uv}\mathbb{I}[i\ne j]$, with $w_{uv} \ge 0$. That is, the cost of an edge only depends on whether the labels of its endpoints match. While seemingly simple, this model is popular in practice (for example, it accounts for several of the instances studied in \citet{kappes2015comparative}). We still use $\theta_{uv}(i,j)$ in what follows for notational convenience, but in the rest of our results, we assume $\theta_{uv}(i,j)$ takes this form. 
 MAP inference in this model is also called \emph{uniform metric labeling} \citep{KleinbergTardos02}, and it is NP-hard for variable $k \ge 3$, even when $G$ is planar
\citep{dahlhaus1992complexity}.

We often use the same symbol $x$ to refer to both a vertex of $M(G)$, referencing the values $x_u(i)$, $x_{uv}(i,j)$, and to a labeling of $G$, referencing $x(u)$.
This is justified because these two objects are in one-to-one correspondence.
For example, we write the objective value of a labeling $x: V\to [k]$ as $\dot{\theta}{x}$.
We also define the (normalized) \emph{Hamming distance} between two labelings $x$ and $x'$ as:
\[
\frac{1}{n}\sum_{u}\mathbb{I}[x(u) \ne x'(u)] = \frac{1}{2n}\sum_{u}\sum_i|x_u(i)-x'_u(i)|.
\]
Finally, for a fixed graph $G$ and a fixed $k$, we identify an instance of the MAP inference problem with its objective vector $\theta = (\theta_u : u, \theta_{uv} : uv)$.
\subsection{Expansion}
Let $x: V\to [k]$ be a labeling of $G$. For any label $\alpha \in [k]$, we say that $x'$ is an $\alpha$-expansion of $x$ if the following hold for all $u\in V$:
\begin{align*}
x(u) = \alpha &\implies x'(u) = \alpha,\\
x'(u) \ne \alpha &\implies x'(u) = x(u).
\end{align*}
That is, $x'$ may not shrink the region of nodes labeled $\alpha$---that region can only expand---and if $x'$ changes any label, the new label must be $\alpha$. The optimal $\alpha$-expansion move of $x$ can be found very efficiently by solving a minimum cut problem in an auxiliary graph $G^x(\alpha)$ \citep{BoyVekZab01}.
Algorithm \ref{alg:expansion} starts with an arbitrary labeling $x: V \to [k]$, then iteratively improves it by making expansion moves. The algorithm converges when there are no expansion moves that decrease the objective $\dot{\theta}{x}$. 
We say a labeling $x$ is a local minimum w.r.t. expansion moves if no expansion move of $x$ strictly decreases the objective.

The approximation guarantee for $\alpha$-expansion states that the objective of any local  minimum is at most the objective of the MAP solution $x^*$ plus the edge cost paid by $x^*$.
\begin{theorem}[\citep{BoyVekZab01} Theorem 6.1]
\label{thm:boykov}
Let $x$ be a local minimum w.r.t. expansion moves. Then
\[
\dot{\theta}{x} \le \dot{\theta}{x^*} + \sum_{uv}\theta_{uv}(x^*(u),x^*(v))
\]
In particular, $\dot{\theta}{x} \le 2\dot{\theta}{x^*}$.
\end{theorem}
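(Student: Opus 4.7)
The plan is to execute the classical exchange argument against $x^*$. For each label $\alpha \in [k]$, I would construct a specific (not necessarily optimal) $\alpha$-expansion $\tilde{x}^\alpha$ of $x$: set $\tilde{x}^\alpha(u) = \alpha$ whenever $x^*(u) = \alpha$, and let $\tilde{x}^\alpha(u) = x(u)$ otherwise. This is a legal $\alpha$-expansion of $x$, so the local-minimality hypothesis gives $\dot{\theta}{x} \le \dot{\theta}{\tilde{x}^\alpha}$, i.e. $0 \le \dot{\theta}{\tilde{x}^\alpha - x}$. I would then sum these $k$ inequalities and unpack the node and edge pieces separately.

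Let $V^*_\alpha \triangleq \{u : x^*(u) = \alpha\}$. The node contribution telescopes exactly, since each $u$ lies in $V^*_\alpha$ for precisely one $\alpha$: the total change in node cost summed over $\alpha$ is $\sum_u (\theta_u(x^*(u)) - \theta_u(x(u)))$, namely the unary part of $\dot{\theta}{x^* - x}$. For the edge contribution, I would partition the edges $uv$ according to how many endpoints lie in $V^*_\alpha$. Under the Potts form $\theta_{uv}(i,j) = w_{uv} \mathbb{I}[i \ne j]$: an edge with both endpoints in $V^*_\alpha$ becomes monochromatic under $\tilde{x}^\alpha$ and so contributes $\le 0$ to the change; an edge with exactly one endpoint in $V^*_\alpha$ contributes at most $w_{uv}$ (a new disagreement in the worst case); an edge with no endpoint in $V^*_\alpha$ contributes $0$.

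Summing over $\alpha$, each edge $uv$ with $x^*(u) \ne x^*(v)$ appears in the ``exactly one endpoint'' category for exactly two values of $\alpha$ (namely $x^*(u)$ and $x^*(v)$), so it contributes at most $2 w_{uv} = 2\, \theta_{uv}(x^*(u), x^*(v))$ to the summed bound, while every other edge contributes $\le 0$. Combining the node and edge pieces with the summed inequality $0 \le \sum_\alpha \dot{\theta}{\tilde{x}^\alpha - x}$ and rearranging yields $\dot{\theta}{x} \le \dot{\theta}{x^*} + \sum_{uv} \theta_{uv}(x^*(u), x^*(v))$, which is the claimed inequality; the $2$-approximation then follows under the standard normalization $\theta_u \ge 0$, because in that case the Potts edge cost of $x^*$ is at most $\dot{\theta}{x^*}$. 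The main obstacle is the edge bookkeeping: verifying that the combinatorial double-counting of cut edges across $\alpha = x^*(u)$ and $\alpha = x^*(v)$ produces exactly the tight factor of two matching the statement, and carefully absorbing the non-positive ``interior'' contributions so that no extra slack appears. The node side, by contrast, is immediate because $\{V^*_\alpha\}_\alpha$ is a partition of $V$.
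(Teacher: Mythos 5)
Your overall strategy is the right one: it is exactly the exchange argument of Boykov--Veksler--Zabih, which the paper cites rather than reproves (its appendix instead runs a generalized version of this same argument against an arbitrary competitor $y$ under the doubled weights $w^x$, and then derives the present inequality from Theorem~\ref{thm:localglobal} in a short proposition). Your construction of $\tilde{x}^\alpha$, the use of local minimality, the telescoping of the node terms over the partition $\{V^*_\alpha\}_\alpha$, and the observation that each $x^*$-cut edge lands in the ``exactly one endpoint'' category for precisely the two labels $x^*(u)$ and $x^*(v)$ are all correct.

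However, the edge bookkeeping as you have written it loses the edge cost of $x$ and therefore does not yield the stated inequality. If you bound the contribution of every edge with both endpoints in $V^*_\alpha$ by $0$ and the contribution of every boundary edge by $w_{uv}$, the summed inequality becomes
\[
0 \;\le\; \sum_u\bigl(\theta_u(x^*(u)) - \theta_u(x(u))\bigr) \;+\; 2\sum_{uv:\,x^*(u)\ne x^*(v)} w_{uv},
\]
which rearranges only to $\sum_u \theta_u(x(u)) \le \dot{\theta}{x^*} + \sum_{uv}\theta_{uv}(x^*(u),x^*(v))$, a bound on the unary part of $\dot{\theta}{x}$ rather than on $\dot{\theta}{x}$ itself. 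To recover the full claim you must keep the old edge costs exactly: an interior edge of $V^*_\alpha$ contributes exactly $-\theta_{uv}(x(u),x(v))$ to the change, and a boundary edge contributes at most $w_{uv} - \theta_{uv}(x(u),x(v))$. Summing over $\alpha$, every edge not cut by $x^*$ then donates $-\theta_{uv}(x(u),x(v))$ once and every edge cut by $x^*$ donates $2w_{uv} - 2\theta_{uv}(x(u),x(v))$; after rearranging, the left-hand side exceeds $\dot{\theta}{x}$ by the nonnegative surplus $\sum_{uv:\,x^*(u)\ne x^*(v)}\theta_{uv}(x(u),x(v))$, which you discard to obtain $\dot{\theta}{x}\le\dot{\theta}{x^*}+\sum_{uv}\theta_{uv}(x^*(u),x^*(v))$. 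You gesture at exactly this issue (``absorbing the non-positive interior contributions so that no extra slack appears''), but the per-edge bounds you actually state introduce that slack, so the derivation as written proves a strictly weaker statement. The final step, deducing the factor-$2$ approximation from nonnegativity of the potentials, is fine.
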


\section{Related work}
\subsection{Perturbation stability}
\citet{LanSonVij18} define $(1,2)$-stable instances of uniform metric
labeling as those instances whose MAP solution does not change when any subset of edges $S \subset E$ can have the weights $w_{uv}$ multiplied by an edge-dependent $\gamma_{uv} \in [1,2]$. 
%That is, $\theta'$ is a $(1,2)$-perturbation of $\theta$ if $\theta'_u = \theta_u$ for all $u$ and $\theta_{uv} \le \theta'_{uv} \le 2\theta_{uv}$ for all $(u,v)$. 
They prove that $\alpha$-expansion recovers the exact MAP
solution on $(1,2)$-stable instances. 
As is typically the case in work on perturbation stability, few guarantees are given for instances that do not satisfy the stability definition.
Unfortunately, the real-world instances that motivated \citet{LanSonVij18}'s work are not stable---the requirement of stability that the solution doesn't change \emph{at all} turns out to be too strict to be practical \citep{LanSonVij19}. Our results are much more general, since they apply to \emph{any} instance (stable or not). 
To go beyond stability, \citet{LanSonVij19} showed that an LP relaxation has approximate recovery guarantees when ``blocks'' (sub-instances) of the instance are perturbation stable, and \citet{LanRedSonVij21} showed that perturbation-stable instances are still approximately solvable after being corrupted by noise. 
Neither of these works gives a guarantee for graph cuts.

\subsubsection{Checking stability}
\citet{LanSonVij19} designed algorithms for checking stability and
sub-instance stability for uniform metric labeling that are based on
solving (a series of) integer linear programs. Surprisingly, we show
that our algorithm, which bounds the performance of all possible $\alpha$-expansion minima,
is \emph{computationally efficient} once an exact MAP solution $x^*$ is known. 

\subsection{Primal-dual graph cut algorithms}
\citet{komodakis2007fast} showed how to interpret $\alpha$-expansion as a primal-dual algorithm for solving the energy minimization problem. This view enables the algorithm to compute certificates of (sub)optimality at essentially no extra cost, so bounds on the gap between the objective of the labeling returned by expansion and the optimal objective can be efficiently obtained in practice. 
%Not surprisingly (given the discussion in Section \ref{sec:intro}), these bounds tend to be very close to 1 on real-world instances. 
Unlike our results, these bounds are initialization-dependent, and they only bound the objective value (they do \textit{not} bound the difference from the global minimum itself). Our structural result can be taken as an \emph{explanation} for (i) why these objective bounds tend to be close to 1 regardless of the initialization and (ii) why the returned labelings have small Hamming distance to the optimal labeling: all local minima w.r.t. expansion moves are global minima for some instance within a small perturbation of the input. On practical instances, solutions to small perturbations tend to have near-optimal objective in the original instance and are close in Hamming distance to the original solution.

\subsection{Partial optimality results for $\alpha$-expansion}
A node/label pair $(u,i)$ is a \emph{partially optimal assignment} (henceforth, a \emph{partopt}) if $x^*(u) = i$ for the MAP solution $x^*$.
 Several works have developed fast algorithms for finding provable partopts, i.e. identifying parts of the MAP assignment \citep[e.g.,][]{kovtun2003partial, shekhovtsov2013exact,swoboda2016partial,shekhovtsov2017maximum}. \citet{shekhovtsov2011partial} showed that if Kovtun's procedure outputs a partopt $(u,i)$, then any expansion minimum $x^{\alpha}$ has $x^{\alpha}(u) = i$. Like our result, this gives a guarantee for $\alpha$-expansion that is independent of the algorithm's initialization: expansion always recovers $x^*(u)$ when Kovtun's procedure finds the optimal label at a vertex $u$. However, this result does not explain \emph{when} Kovtun's procedure finds a large number of partopts. In contrast, our results only rely on a structural property of the \emph{instance} itself (that the solutions to perturbations are close to the solutions of the original).
Moreover, our algorithm in Section \ref{sec:verifying} for bounding $\alpha$-expansion's Hamming error is meant to illustrate the tightness of our structural result, not to give a fast method for finding provably partially optimal assignments.

\subsection{Certified algorithms}
Our results are very related to the study of \emph{certified}
algorithms \citep{MakMak20, AngAwaBluChaDan19}. Informally, a
\emph{certified} algorithm is one that returns a global (exact)
solution to a perturbation of the input problem. We prove that
$\alpha$-expansion is a certified algorithm for uniform metric
labeling.
Our algorithm in Section \ref{sec:verifying} for upper bounding $\alpha$-expansion's error could be used to upper bound the error of other certified algorithms.
The fact (proven here) that a popular algorithm with a long track record of empirical success is a certified algorithm suggests that this model could be useful for understanding the empirical performance of algorithms on hard problems. Exact solutions to small perturbations of
the input can be efficiently obtained despite hardness of the original
problem, and these exact solutions are often very close to those of
the original problem in practice.

\section{Expansion always finds a global optimum}
In this section, we give our main theoretical results. Theorem \ref{thm:localglobal} states that every labeling $x$ that is a local minimum  w.r.t. expansion moves is a global minimum (an exact MAP solution) in a perturbed version of the input problem instance.
Theorem \ref{thm:exactpert} then gives a precise characterization of a perturbation in which $x$ is optimal.
The simple structure of these perturbations is useful in the development of our algorithm in Section~\ref{sec:verifying}. We defer both proofs to Appendix~\ref{apdx:main-result}.

\begin{theorem}
\label{thm:localglobal}
Let the labeling $x$ be a local minimum with respect to expansion moves for the instance with objective $\theta$. Let $\mathcal{I}(\theta)$ be the set of $\theta'$ that for some $\gamma \in [1,2]^{|E|}$ satisfy:
\begin{alignat}{2}
    \label{eqn:idefn}
    \theta'_u &= \theta_u \qquad && \forall u \in V\nonumber\\
    \theta'_{uv} &= \gamma_{uv}\theta_{uv} \qquad &&\forall (u,v) \in E
\end{alignat}
%\begin{equation}
%\label{eqn:idefn}
%\mathcal{I}(\theta) = \{\theta' : \theta'_u = \theta_u; \theta'_{uv} = \gamma_{uv}\theta_{uv}, \gamma_{uv} \in [1,2]\}.
%\end{equation}
Then there exists $\theta' \in \mathcal{I}(\theta)$ for which $x$ is a MAP solution. 
\end{theorem}
The definition of $\mathcal{I}(\theta)$ requires that each $\theta' \in \mathcal{I}(\theta)$ has exactly the same node costs as $\theta$, and that the pairwise potentials $\theta'_{uv} = \gamma_{uv}\theta_{uv}$ for an edge-dependent constant $\gamma_{uv} \in [1,2]$. Theorem \ref{thm:localglobal} says that for each local minimum $x$ to the input instance $\theta$, there exists at least one instance $\theta' \in \mathcal{I}(\theta)$ for which $x$ is a global minimum. The next theorem gives a closed form for one such $\theta'$ in terms of $x$.
\begin{theorem}
\label{thm:exactpert}
%Let $x$ be a local expansion minimum for an instance $\theta$ with edge weights $w_{uv}$, a
Given an instance $\theta$ with edge weights $w_{uv}$ and an expansion minimum $x$ for $\theta$, define perturbed weights $w^x_{uv}$:
\begin{equation}
\label{eqn:wx-defn}
w^x_{uv} = \begin{cases}
    w_{uv} & x(u) \ne x(v)\\
    2w_{uv} & x(u) = x(v),
\end{cases}
\end{equation}
%\vspace{-.8em}
and let 
\begin{equation}
    \label{eqn:thetax-defn}
    \theta^x_{uv}(i,j) = w^x_{uv}\mathbb{I}[i\ne j]
\end{equation}
be the pairwise Potts energies corresponding to the weights $w^x$.
Then $x$ is a global minimum in the instance with objective vector $\theta^x = (\theta_u : u;\ \theta_{uv}^x : uv)$. 
%\todocomment{I think it could be helpful to frame this as a bijection between solutions (maybe restricted to solutions which are local minima w.r.t. expansion moves) and problem instances.}
This is the Potts model instance with the same node costs $\theta_u(i)$ as the original instance, but new pairwise energies $\theta_{uv}^x(i,j)$ defined using the perturbed weights $w^x$. Note that $\theta^x \in \mathcal{I}(\theta)$.
Additionally, the LP relaxation \eqref{eqn:local-lp}
is tight on this perturbed instance.
%\todocomment{Maybe the LP relaxation claim should be a separate result? Makes it easier to separately refer to.}
\end{theorem}

Theorem \ref{thm:localglobal} strictly and significantly generalizes Theorem 2 of \citet{LanSonVij18}, since $\mathcal{I}(\theta)$ is the set of $(1,2)$-perturbations of the input instance $\theta$. 
The analysis is similar to that in \citet{LanSonVij18}, but
reinterpreted through the certified algorithm lens of
\citet{MakMak20}. 
The guarantee of Theorem \ref{thm:exactpert} that the LP relaxation \eqref{eqn:local-lp} is tight in the perturbed instance is crucial to our algorithm in Section \ref{sec:verifying}. Theorems \ref{thm:localglobal} and \ref{thm:exactpert} also apply to any iterative algorithm whose set of iterative moves contain the set of expansion moves, such as FastPD.

\section{How ``bad'' are solutions to perturbations?}
\label{sec:verifying}
Theorem \ref{thm:localglobal} guarantees that when $\alpha$-expansion is run on an instance $\theta$, it always returns a MAP solution to some instance $\theta' \in \mathcal{I(\theta)}$.
To evaluate how informative this guarantee is, we need a method to find the ``worst'' solution out of all the solutions to instances in $\mathcal{I}(\theta)$. That is, let 
\begin{equation}
\label{eqn:s-defn}
\mathcal{S}(\theta) = \{x : x \text{ a MAP solution for some } \theta' \in \mathcal{I}(\theta)\}.
\end{equation} Theorem \ref{thm:localglobal} implies all local expansion minima $x$ have $x \in \mathcal{S}(\theta)$. This structural condition is informative for an instance $\theta$ if every solution in $\mathcal{S}(\theta)$ is close to the MAP solution $x^*$ of $\theta$. In that case, our result \emph{explains} why $\alpha$-expansion always performs well. Our hypothesis is that real-world instances should have this property, but we need a method to verify this hypothesis empirically.

In this section, we design an efficient algorithm for upper-bounding the value of any concave function $f(x)$ over $\mathcal{S}(\theta)$. 
For example, $f(x)$ could measure the Hamming distance to $x^*$ or the objective gap of $x$ in the original instance $\theta$.
Note that in these cases, the algorithm must be given $x^*$ to compute $f(x)$, but this need not be true for general $f$. 
For example, in a \emph{learning} scenario, $f(x)$ could measure Hamming distance to the known ground-truth assignment.
Because all local expansion minima are contained in $S(\theta)$ by Theorem \ref{thm:localglobal}, bounds for these quantities give initialization-independent bounds on expansion's performance.

Formally, we want to solve
\begin{align}
\label{eqn:maximization}
\maximize_x \qquad & f(x)\\
\text{subject to} \qquad & x \in \mathcal{S}(\theta).\nonumber
\end{align}
Here $f(x)$ is a concave function that measures the ``badness'' of $x$.
For example, if $x^*$ is a MAP solution to the original instance, we could take $f(x) = \dot{\theta}{x}/\dot{\theta}{x^*}$, the objective gap of $x$. 
Similarly, we can let $f(x)$ be the Hamming distance between $x$ and $x^*$, which can be expressed as:
\[
f(x) = \frac{1}{2n}\left(\sum_{u\in V}\sum_{i\ne x^*(u)} x_u(i) - \sum_{u\in V} x_u(x^*(u)) + n\right),
\]
where we are taking $x^*$ as a labeling and $x$ as a point of $M(G)$. This is an affine function of $x$.
Let $\eta$ be the optimal value of \eqref{eqn:maximization}. Then by Theorem \ref{thm:localglobal}, all local expansion minima $x$ satisfy $f(x) \le \eta$. Solving \eqref{eqn:maximization} thus gives an upper bound on the error of all expansion minima.
For simplicity, and because we use the two functions above in our empirical results, we assume in what follows that $f(x)$ is affine. We can then replace maximization of $f(x)$ with maximization of $\dot{f}{x}$ for some vector $f$. However, our algorithm works for any concave function.

In the rest of this section, we design an efficient algorithm for upper-bounding the optimal value of \eqref{eqn:maximization} by deriving a sequence of equivalent problems, then performing a convex relaxation. In several of our experiments in Section \ref{sec:exps}, we find that the bound obtained by our algorithm is nearly tight.

\begin{theorem}
\label{thm:solvable}
For affine functions $f$, \eqref{eqn:maximization} can be exactly represented by an integer linear program (ILP). Additionally, an upper bound on the optimal value of \eqref{eqn:maximization} can be obtained efficiently using a linear program.
\fxnote{make a note for arbitrary concave $f$?}
\end{theorem}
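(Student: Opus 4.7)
The plan is to use LP duality on each perturbed instance---enabled by the tightness guarantee of Theorem \ref{thm:localglobal}---to replace the implicit ``$x$ is optimal in the perturbed instance'' constraint by explicit polyhedral inequalities, and then to exploit two structural properties of $\theta^x$ to keep the resulting formulation linear in the ILP variables. The catch is that the natural duality certificate involves $\langle \theta^x, x\rangle$, which is quadratic in $x$; the main task is to linearize this cleanly while still yielding a valid relaxation.

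First, I would express the optimality constraint via duality. Let $Ay=b$, $y\ge 0$ denote the constraints defining $L(G)$, so the perturbed LP $\min_{y\in L(G)}\langle \theta^x, y\rangle$ has dual $\max\{b^T \lambda : A^T \lambda \le \theta^x\}$. Because Theorem \ref{thm:localglobal} guarantees this LP is tight on every perturbed instance, strong duality holds, and the labeling $x$ is optimal for the perturbed LP iff there exist dual variables $\lambda$ satisfying $A^T \lambda \le \theta^x$ together with $b^T \lambda \ge \langle \theta^x, x\rangle$ (since $x\in M(G)\subset L(G)$ is primal feasible, weak duality then forces equality). Next I would note two structural facts about $\theta^x$. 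From \eqref{eqn:wx-defn}, the perturbed weight can be written as $w^x(u,v) = w_{uv}\bigl(1+\sum_\ell x_{uv}(\ell,\ell)\bigr)$, since $\sum_\ell x_{uv}(\ell,\ell)$ is the indicator that $u$ and $v$ receive the same label. Hence $\theta^x_{uv}(i,j) = w_{uv}\,\mathbb{I}[i\ne j]\bigl(1+\sum_\ell x_{uv}(\ell,\ell)\bigr)$ is an \emph{affine} function of the ILP variables, so $A^T\lambda \le \theta^x$ is a system of linear inequalities in $(x,\lambda)$. Moreover, for every integer labeling $x$ one has $\langle \theta^x, x\rangle = \langle \theta, x\rangle$: uncut edges contribute zero to both inner products, and on cut edges $w^x = w$. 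Substituting this identity into the duality certificate turns $b^T\lambda \ge \langle \theta^x, x\rangle$ into the linear $b^T\lambda \ge \langle \theta, x\rangle$.

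Combining these observations with the integrality constraints of \eqref{eqn:ILP} (which enforce ``$x$ is a vertex of $M(G)$'') yields an exact integer linear program in the joint variables $(x,\lambda)$ with affine objective $f(x)$, proving the first part of the theorem. For the second part, I would relax the integrality constraints on $x_u(i)$ and $x_{uv}(i,j)$ to $[0,1]$, giving an ordinary LP. Every integer-feasible pair $(x,\lambda)$ of the ILP remains feasible for this relaxation---the substitution $\langle \theta, x\rangle \leftarrow \langle \theta^x, x\rangle$ was exact on integer $x$---so the LP optimum upper-bounds the ILP optimum as claimed. The main subtlety I expect is precisely this substitution: the quantity $\langle \theta^x, x\rangle$ is quadratic in $x$, so any naive encoding would be nonlinear, and replacing it with $\langle \theta, x\rangle$ genuinely alters the meaning of the constraint on fractional $x$. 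This is harmless for producing a valid upper bound, because the LP need only \emph{contain} the integer feasible points; it need not coincide with the ILP on fractional ones. The same template extends to any concave $f$, since the feasible set is still polyhedral and maximizing a concave function over a polytope remains a convex program.
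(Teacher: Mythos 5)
Your proposal is correct and follows essentially the same route as the paper: both use the tightness guarantee of Theorem~\ref{thm:localglobal} to replace the optimality constraint by LP duality conditions, linearize $\theta^x$ as an affine function of the edge indicator variables, exploit the identity $\langle\theta^x,x\rangle = \langle\theta,x\rangle$ on integer labelings, and then relax integrality to obtain the LP upper bound. The only cosmetic difference is that the paper additionally tightens the relaxation iteratively with cycle inequalities ($K_t \subset K_{t-1}$, starting from $K_0 = L(G)$), which your plain $[0,1]$ relaxation corresponds to at $t=0$.
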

\begin{proof}
\vspace{-2mm}
As written, \eqref{eqn:maximization} is difficult to optimize because it searches over the set $\mathcal{S}(\theta)$ of MAP solutions to instances $\theta' \in \mathcal{I}(\theta)$. This is not a convex set. First, the following lemma gives a simpler characterization of $\mathcal{S}(\theta)$.
\begin{lemma}
\label{lem:theta-x-enough}
\begin{align*}
    \mathcal{S}(\theta) = \{x : x &\text{ a MAP solution to the instance } \theta^x \\
    &\text{ defined by \eqref{eqn:wx-defn} and \eqref{eqn:thetax-defn}}\}
\end{align*}
\end{lemma}
\begin{proof} We show in Appendix \ref{apdx:main-result} that if $x$ is optimal for any $\theta' \in \mathcal{I}(\theta)$, $x$ is optimal for $\theta^x$. This immediately gives the result.
\end{proof}
So we can rewrite \eqref{eqn:maximization} as:
\begin{align}
\label{eqn:maximization1}
\maximize_x \qquad & f(x)\\
\text{subject to} \qquad & x \text{ a vertex of } M(G),\nonumber\\
& x \text{ optimal in the instance }\nonumber\\
& \text{with objective } \theta^x\nonumber.
\end{align}
The first constraint ensures that $x$ is a valid labeling, and the second constraint ensures (by Lemma \ref{lem:theta-x-enough}) that $x \in \mathcal{S}(\theta)$. Now we focus on simplifying the optimality constraint.
Let $x$ be an optimal labeling in the instance with objective $\theta^x$. By Theorem \ref{thm:exactpert}, for all LP-feasible points $y \in L(G)$, we have that $\dot{\theta^x}{x} \le \dot{\theta^x}{y}$. 
That is, the local LP relaxation is tight on this instance---even though $x$ is an integer solution, its objective value $\dot{\theta^x}{x}$ is as good as that of any fractional solution.
This allows us to rewrite the optimality constraint using the following valid constraint: %\todocomment{``using the local polytope" - mention name of object again}:
\begin{align*}
\maximize_x \qquad & \dot{f}{x}\\
\text{subject to} \qquad & x \text{ a vertex of } M(G)\\
& \dot{\theta^x}{x} \le \min_{y\in L(G)}\dot{\theta^x}{y},
\end{align*}
Note that if the local LP relaxation were not tight on the instance with objective $\theta^x$ (as guaranteed by Theorem \ref{thm:exactpert}), this constraint would be invalid.
Even with this simplification, the dependence of $\theta^x$ on $x$ makes it unclear whether the new constraint is convex. 
Because $x$ is a vertex of $M(G)$, it only takes values in $\{0,1\}$, so we can rewrite $w^x_{uv}$ from \eqref{eqn:wx-defn} as a linear function of $x$:
\[
w^x_{uv} = w_{uv} + w_{uv}\left(1-\sum_{i\ne j}x_{uv}(i,j)\right).
\]
This is because $\sum_{i\ne j}x_{uv}(i,j)$ is 0 if $x(u) = x(v)$ and 1 otherwise. Then, because $w^x_{uv}$ is a linear function of $x$, $\dot{\theta^x}{y}$ is a linear function of $x$ for each ${y\in L(G)}$. 
Additionally, observe that because $x\in M(G)$, \eqref{eqn:thetax-defn} implies ${\dot{\theta^x}{x} = \dot{\theta}{x}}$: the perturbed objective of $x$ is equal to its original objective (note, however, $y\ne x$ may have $\dot{\theta^x}{y} \ne \dot{\theta}{y}$). 
Using these simplifications, we can solve the following equivalent problem:
\begin{align*}
\maximize_x \qquad & \dot{f}{x}\\
\text{subject to} \qquad & x \text{ a vertex of } M(G)\\
& \dot{\theta}{x} \le \min_{y\in L(G)}\dot{\theta^x}{y}.
\end{align*}
Because we have shown how to re-write $\dot{\theta^x}{y}$ as a linear function of $x$ and removed $\theta^x$ from the left-hand-side, the second constraint is convex. 
However, two barriers remain to solving this problem efficiently: (i) the optimality constraint $\dot{\theta}{x} \le \min_{y\in L(G)}\dot{\theta^x}{y}$ is not in a convenient form, and (ii) the first constraint is not convex.
We address (i) first.

For ease of notation, define $A$ and $b$ so that the local polytope ${L(G) = \{x | Ax = b,\ x\ge 0\}}$.
Because strong duality holds for the local LP relaxation in the instance with objective $\theta^x$, we know that 
\[
\min_{y:Ay=b,y\ge 0} \dot{\theta^x}{y} = \max_{\nu: A^T \nu \le \theta^x} \dot{b}{\nu}.
\]
Indeed, if there exists any feasible $y,\nu$ pair for which $\dot{\theta^x}{y} = \dot{b}{\nu}$, $y$ and $\nu$ are optimal primal and dual solutions, respectively.
We want to enforce the constraint that $x$ is primal-optimal in the instance with objective $\theta^x$, which is the case if and only if there exists a dual-feasible $\nu$ with $\dot{\theta^x}{x} = \dot{\theta}{x} = \dot{b}{\nu}$.
So we can rewrite the problem as
\begin{align*}
\maximize_{x,\nu} \qquad & \dot{f}{x}\\
\text{subject to} \qquad & x \text{ a vertex of } M(G)\\
& \dot{\theta}{x} = \dot{b}{\nu}\\
& A^T\nu \le \theta^x.
\end{align*}
Because $\theta^x$ is a linear function of $x$, the latter two constraints are linear in $x$ and $\nu$.
Together with linearizing $\theta^x$ and noting $\dot{\theta^x}{x} = \dot{\theta}{x}$, this primal-dual trick allowed us to encode the second constraint of \eqref{eqn:maximization1} as two sets of linear constraints. 
This trick heavily relies on the guarantee from Theorem \ref{thm:exactpert} that the local LP is tight on the instance with objective $\theta^x$.

\begin{figure*}[t]
    \centering
    \begin{subfigure}{0.25\textwidth}
    \includegraphics[width=1.0\textwidth]{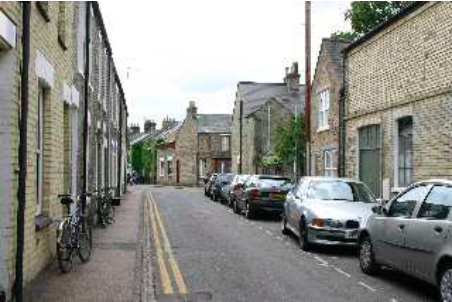}    
    \end{subfigure}%
    \begin{subfigure}{0.25\textwidth}
    \includegraphics[width=1.0\textwidth]{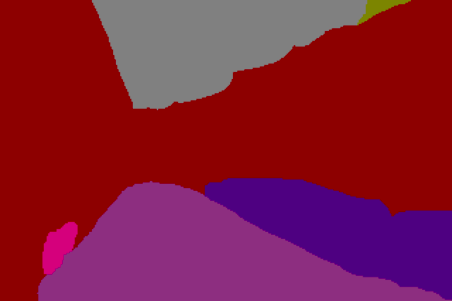}    
    \end{subfigure}%
    \begin{subfigure}{0.25\textwidth}
    \includegraphics[width=1.0\textwidth]{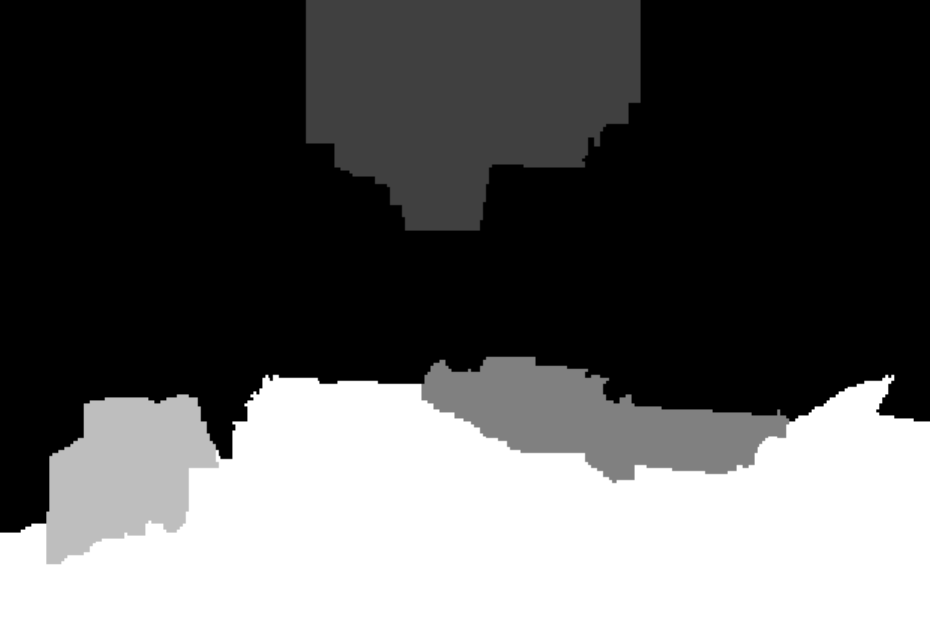}    
    \end{subfigure}%
    \begin{subfigure}{0.25\textwidth}
    \includegraphics[width=1.0\textwidth]{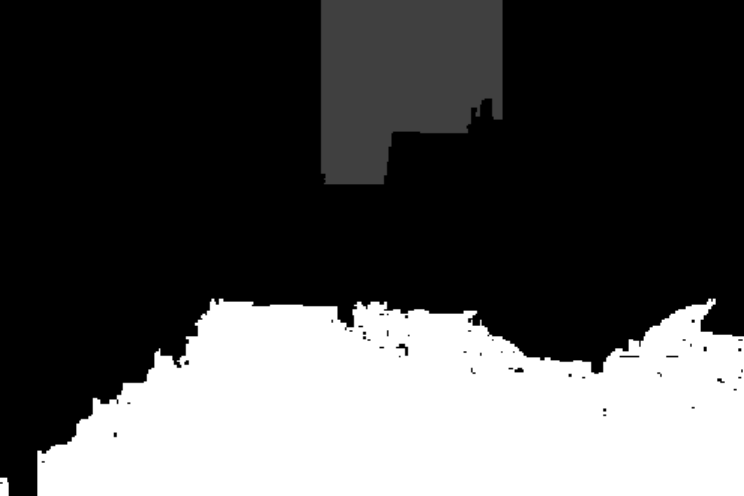}
    \end{subfigure}\\
    \begin{subfigure}{0.25\textwidth}
    \includegraphics[width=1.0\textwidth]{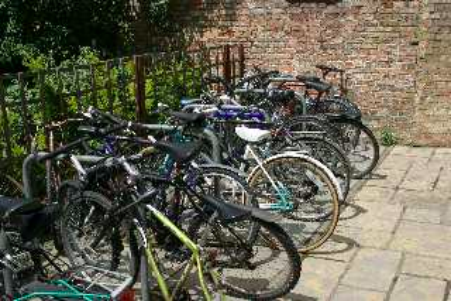}    
    \end{subfigure}%
    \begin{subfigure}{0.25\textwidth}
    \includegraphics[width=1.0\textwidth]{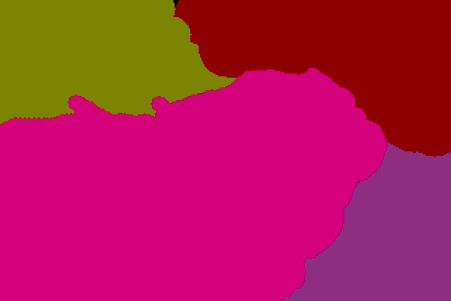}    
    \end{subfigure}%
    \begin{subfigure}{0.25\textwidth}
    \includegraphics[width=1.0\textwidth]{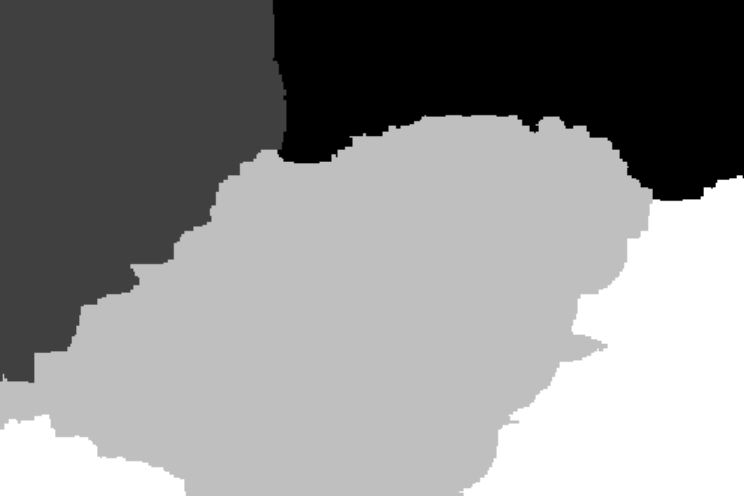}    
    \end{subfigure}%
    \begin{subfigure}{0.25\textwidth}
    \includegraphics[width=1.0\textwidth]{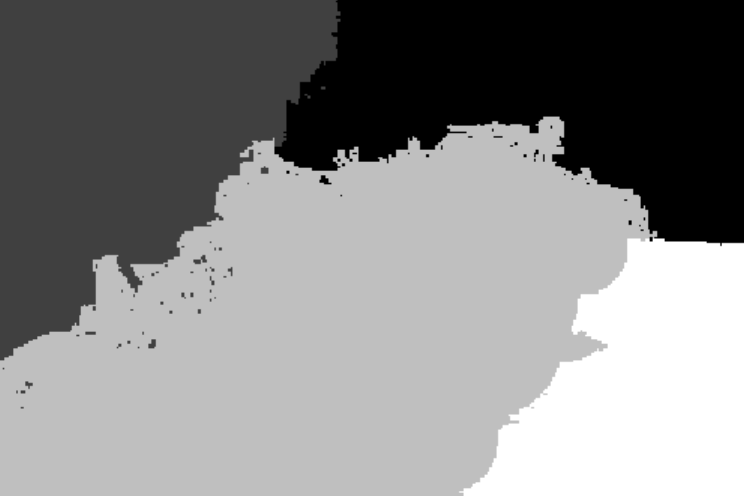}
    \end{subfigure}
    \begin{subfigure}{0.25\textwidth}
    \includegraphics[width=1.0\textwidth]{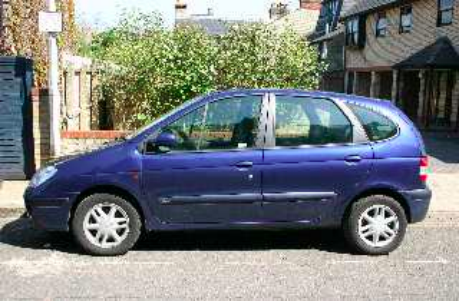}    
    \end{subfigure}%
    \begin{subfigure}{0.25\textwidth}
    \includegraphics[width=1.0\textwidth]{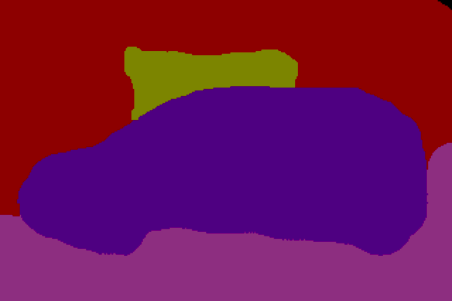}    
    \end{subfigure}%
    \begin{subfigure}{0.25\textwidth}
    \includegraphics[width=1.0\textwidth]{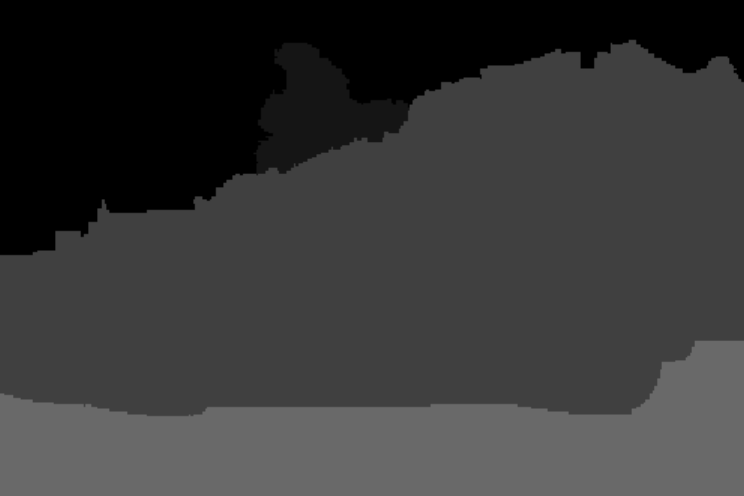}    
    \end{subfigure}%
    \begin{subfigure}{0.25\textwidth}
    \includegraphics[width=1.0\textwidth]{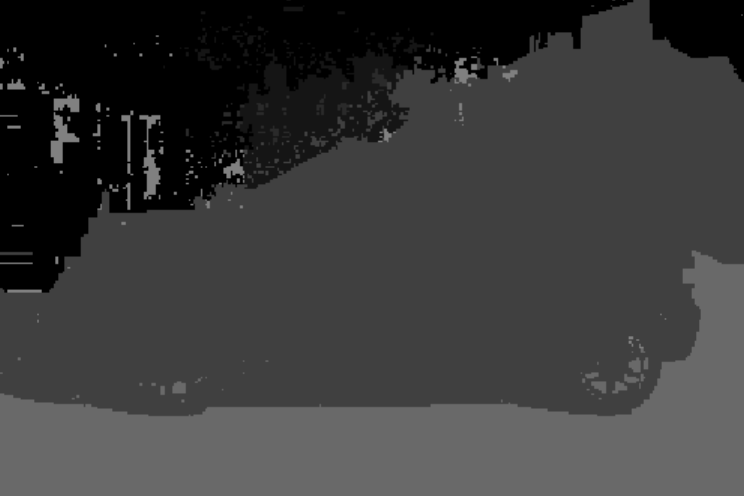}
    \end{subfigure}    
    \caption{Left column: original image; Center-left: ground-truth segmentation map; Center-right: exact MAP solution $x^*$ in the observed instance; Right: a local expansion minimum that nearly achieves our upper bound on the Hamming error. Rows: {\tt road}, {\tt bikes}, {\tt car}. On these instances, our theoretical result guarantees that the Hamming error of \emph{any} local expansion minimum is at most 17\%, 14\%, and 8\%, respectively. The local expansion minima in the rightmost column have Hamming error of 11\%, 8\%, and 7\% of the nodes, respectively. Our theoretical result implies that these local minima are almost the ``worst possible'' w.r.t. Hamming error. These ``bad'' expansion minima were found by initializing the $\alpha$-expansion algorithm with (a rounded version of) the labeling $x$ output by \eqref{eqn:relaxed-alg}.}
    \label{fig:objseg_bound}
    \vspace{-1em}
\end{figure*}

The only remaining issue is the first constraint, that $x$ is a vertex of $M(G)$.
We saw in Section \ref{sec:prelim} how to encode the vertices of $M(G)$ using linear and integrality constraints,
so we can rewrite the above problem as the ILP:
\begin{align}
\label{eqn:exact-alg}
\maximize_{x,\nu} \qquad & \dot{f}{x}\\
\text{subject to} \qquad & x \in L(G)\nonumber\\
& x_{u}(i) \in \{0,1\}\nonumber\\
& x_{uv}(i,j) \in \{0,1\}\nonumber\\
& \dot{\theta}{x} = \dot{b}{\nu}\nonumber\\
& A^T\nu \le \theta^x.\nonumber
\end{align}
Unfortunately, this ILP is too large for off-the-shelf ILP solvers to handle in practice. 
Instead, we relax this exact formulation to obtain upper bounds.

In particular, we iteratively solve the following optimization problem:
\begin{align}
\label{eqn:relaxed-alg}
\maximize_{x,\nu} \qquad & \dot{f}{x}\\
\text{subject to} \qquad & x \in K_t\nonumber\\
& \dot{\theta}{x} = \dot{b}{\nu}\nonumber\\
& A^T\nu \le \theta^x\nonumber,
\end{align}
where $M(G) \subset K_t$ for all $t$, and $K_t \subset K_{t-1}$. 
We start with $K_0 = L(G)$, then use the ``cycle constraints'' from \citet{sontag2008new} to go from $K_t$ to $K_{t+1}$.
Violated cycle constraints can be found efficiently by computing shortest paths in an auxiliary graph that depends on the solution $x^t$ to this program. 
Even if we could efficiently represent the constraint that $x\in M(G)$, this approach would still be a relaxation of the ILP formulation, because the optimal $x$ may not be attained at a vertex of $M(G)$.
However, this relaxation is nearly tight on several of our empirical examples. 
The exact ILP formulation \eqref{eqn:exact-alg} and its relaxation \eqref{eqn:relaxed-alg} give both claims of Theorem \ref{thm:solvable}.
\end{proof}

There is also a simpler approach to upper-bounding the optimal value of \eqref{eqn:maximization} for affine $f$ based on Theorem \ref{thm:boykov}, the original approximation guarantee for $\alpha$-expansion. That result guarantees that any expansion minimum satisfies $\dot{\theta}{x} \le \dot{\theta}{x^*} + \sum_{uv}\theta_{uv}(x^*(u),x^*(v)).$ Therefore, we can upper bound \eqref{eqn:maximization} with the ILP:
\begin{align}
\label{eqn:naive-alg}
\maximize_{x} \qquad & \dot{f}{x}\\
\text{subject to} \qquad & x \text{ a vertex of } M(G)\nonumber\\
& \dot{\theta}{x} \le \dot{\theta}{x^*} + \sum_{uv}\theta_{uv}(x^*(u),x^*(v)).\nonumber
\end{align}
Like \eqref{eqn:exact-alg}, this is an ILP. 
We refer to this as the \emph{\naive} bound, since it comes directly from the approximation guarantee for $\alpha$-expansion.
In the next section, we compare \eqref{eqn:relaxed-alg} to \eqref{eqn:naive-alg} on real-world instances, and find that our bound \eqref{eqn:relaxed-alg} is much tighter. 
Intuitively, our bound carefully tries to enforce that the optimization variable $x$ is an \emph{optimal point} in some instance, whereas the \naive~bound may allow for feasible points $x$ that are not optimal in any instance.

\section{Numerical results}
%\todocomment{I think this section is likely to confuse at least one reviewer. If they're used to papers which just show better performance than existing methods, it might seem weird that your algorithm is purposely searching for the \textit{worst} solution. I would emphasize more the contrast with random initializations and describe what somebody would gain from applying this method - should practitioners apply this? Or should this be used to better understand the stability properties of a large range of ``natural" problems?}
\label{sec:exps}
\begin{table*}[t]
    \centering
    \caption{Results of our bound on six real instances.}
    \begin{tabular}{lcccc}
         Instance & Obj. bd. (ours) & Obj. bd. (\naive) & Ham. err. bd. (ours)  & Ham. bd. (\naive)\\
         \toprule
         ${\tt tsukuba}$ & 1.213 & 1.228 & 0.290 & 0.821\\
         %${\tt venus(75x85)}$ & & 0.478\\
         ${\tt venus}$ & 1.199 & 1.268 & 0.375 & 0.703\\
         ${\tt plastic}$ & 1.073 & 1.095 & 0.373 & 0.779\\
         \midrule
         ${\tt road}$ & 1.031 & 1.036 & 0.171 (0.114) & 0.256\\
         ${\tt bikes}$ & 1.027 & 1.030 & 0.146 (0.082) & 0.229\\
         ${\tt car}$ & 1.019 & 1.047 & 0.081 (0.074) & 0.225\\         
         \bottomrule
    \end{tabular}
    \vspace{1em}
    \caption*{Results on six MAP inference instances from computer vision: 3 stereo vision (top) and 3 object segmentation (bottom). Our bounds on the objective gap and Hamming error are obtained by iteratively running \eqref{eqn:relaxed-alg}. The ``\naive'' bounds are obtained by using \eqref{eqn:naive-alg}. Our procedure results in slightly tighter objective gap bounds and much tighter Hamming error bounds. For the object segmentation instances, lower bounds on the Hamming error of local expansion minima are shown in parentheses. That is, there exist local expansion minima with the Hamming error displayed in parentheses. These minima are shown in Figure \ref{fig:objseg_bound}, and were found by running $\alpha$-expansion initialized with the output of \eqref{eqn:relaxed-alg}. Our Hamming error bound implies that these are almost the ``worst possible'' expansion minima w.r.t. Hamming error. For example, on the {\tt car} instance, our bound guarantees that \emph{any} local expansion minimum agrees with the MAP solution on at least 91.9\% of the vertices, and we have found a local minimum that agrees with the MAP solution on just 92.6\% of the vertices.}
    \label{tbl:boundtable}
    \vspace{-0.5em}
\end{table*}

In this section, we run \eqref{eqn:relaxed-alg} on several real-world MAP inference instances to evaluate the tightness of bounds derived from our structural condition (Theorem \ref{thm:localglobal}). 
Theorem \ref{thm:localglobal} guarantees that all local expansion minima $x$ for instance $\theta$ are contained in $\mathcal{S}(\theta)$, the set of exact solutions to certain perturbations of the input problem $\theta$.
If we upper bound the Hamming distance to $x^*$ and the objective gap $\dot{\theta}{x}/\dot{\theta}{x^*}$ over $\mathcal{S}(\theta)$, we obtain upper bounds on the Hamming recovery and objective gap that apply to \emph{all} solutions that can possibly be returned by $\alpha$-expansion.
These ``problem-dependent worst-case'' bounds hold for every possible initial labeling and every possible update order in Algorithm \ref{alg:expansion}.

Broadly, we find that the real-world examples we study are not pathological: global optima to perturbed instances tend to be quite close to global optima of the original instance.
Together with Theorem \ref{thm:localglobal}, this implies that these instances have no spurious local minima w.r.t. expansion moves.

We study two types of instances: first, a stereo vision problem, where the weights $w$ and costs $\theta_u(i)$ are set ``by hand'' according to the model from \citet{tappen2003comparison}. 
Given two images taken from slightly offset locations, the goal is to estimate the depth of every pixel in one of the images. 
This can be done by estimating, for each pixel, the disparity between the two images, since the depth is inversely proportional to the disparity. 
In the \citet{tappen2003comparison} model, the node costs are set using the sampling-invariant technique from \citet{birchfield1998pixel}, and the weights $w_{uv}$ are set as:
\begin{equation*}
  w_{uv} = \begin{cases}
    P \times s & |I(u) - I(v)| < T\\
    s & \text{otherwise},
  \end{cases}
\end{equation*}
where $P,T,$ and $s$ are the parameters of the model, and $I(u)$ is the intensity of pixel $u$ in one of the input images to the stereo problem. These edge weights charge more for separating pixels with similar intensities, since nearby pixels with similar intensities are likely to be at the same depth. We also study object segmentation instances, where the weights $w$ and costs $\theta_u$ are learned from data. In this problem, the goal is to assign a label to each pixel that represents the object to which that pixel belongs.
For these instances, we use the models from \citet{alahari2010dynamic}.
We include the full details of both models in Appendix \ref{apdx:model-details}.

Table \ref{tbl:boundtable} shows the results of running several rounds of \eqref{eqn:relaxed-alg} on six of these instances. 
For each instance, we compare against the \naive~objective bound $\dot{\theta}{x^*} + \sum_{uv}\theta_{uv}(x^*(u),x^*(v))$ obtained from the original proof of $\alpha$-expansion's approximation guarantee, and against the \naive~Hamming bound obtained by solving \eqref{eqn:naive-alg}. 
We used Gurobi \citep{gurobi} to run the iterations of \eqref{eqn:relaxed-alg} and to solve the ILP \eqref{eqn:naive-alg}. We added cycle inequalities using the $k$-projection graph \citep{sontag2008new}, adding several violated inequalities per iteration. We ran between 10 and 20 iterations of \eqref{eqn:relaxed-alg} for each experiment. Tightening using the cycle inequalities was beneficial in practice. For example, it improved our Hamming error bound on the {\tt tsukuba} instance from $0.38$ to $0.29$.

Compared to \eqref{eqn:naive-alg}, our procedure results in slightly tighter objective bounds and much tighter Hamming bounds on these instances. For example, on the {\tt car} instance, our bound certifies that \emph{all} local minima w.r.t. expansion moves must agree with the MAP solution $x^*$ on at least 91.9\% of the nodes. Moreover, there exists an expansion minimum for this instance that agrees on only 92.6\% of the vertices, which nearly matches our bound. This ``worst-case'' expansion minimum is shown in Figure~\ref{fig:objseg_bound}.

\section{Conclusion}
We have shown that graph cuts algorithms, such as $\alpha$-expansion and FastPD, take advantage of special structure in real-world problem instances with Potts potentials.
Our empirical results show that the solutions (the global energy minima) to small perturbations of the input are often very close to the solutions of the original instance. 
Our theoretical result states that all local minima w.r.t. expansion moves are global minima in such perturbations.
Taken together, these two results imply that there are no spurious local minima w.r.t. expansion moves in practice.
This gives a new theoretical explanation for the good performance of graph cuts algorithms in the wild.
Moreover, our structural result could have practical consequences for learning Markov random fields. To ensure $\alpha$-expansion performs well on an instance, one could add a regularization term during learning that encourages the solutions to small perturbations $\mathcal{I}(\theta)$ of the instance to be close to the solution of the original.

\clearpage
\clearpage
\section*{Acknowledgments}
The authors thank Chandler Squires for his helpful feedback on drafts of this paper and an anonymous reviewer for pointing us to \citet{shekhovtsov2011partial}. This work was supported by NSF AitF awards CCF-1637585 and CCF-1723344.
\bibliography{graphcuts}
\bibliographystyle{icml2021}
%\clearpage
\onecolumn
\icmlsupptitle{Graph cuts always find a global optimum for Potts models (with a catch): supplementary material}

\appendix

\section{Proof of Theorem \ref{thm:exactpert}}
\label{apdx:main-result}
In this section, we give the full proof of Theorem \ref{thm:exactpert}, restated here. Theorem \ref{thm:localglobal} is then a straightforward corollary of Theorem \ref{thm:exactpert} (Theorem \ref{thm:exactpert} is essentially the constructive version of Theorem \ref{thm:localglobal}).
\begin{theorem*}
Consider an input instance $\theta$ with Potts pairwise potentials and weights $w$, and let the labeling $x$ be a local minimum for $\theta$ with respect to expansion moves. Define perturbed weights ${w^x: E\to \mathbb{R}_+}$ as
\begin{equation}
\label{eqn:wx-defn-apdx}
w^x_{uv} = \begin{cases}
    w_{uv} & x(u) \ne x(v)\\
    2w_{uv} & x(u) = x(v),
\end{cases}
\end{equation}
and let 
\begin{equation}
    \label{eqn:thetax-defn-apdx}
    \theta^x_{uv}(i,j) = w^x_{uv}\mathbb{I}[i\ne j]
\end{equation}
be the pairwise Potts energies corresponding to the weights $w^x$.
Then $x$ is a global minimum in the instance with objective vector $\theta^x = (\theta_u : u;\ \theta_{uv}^x : uv)$. 
This is the Potts model instance with the same node costs $\theta_u(i)$ as the original instance, but new pairwise energies $\theta_{uv}^x(i,j)$ defined using the perturbed weights $w^x$.
Additionally, the LP relaxation \eqref{eqn:local-lp}
is tight on this perturbed instance.
\end{theorem*}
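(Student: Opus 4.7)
The plan is to follow the reduction outlined in the sketch: it suffices to produce, for every labeling $x$ and every optimal LP solution $y' \in \argmin_{y\in L(G)}\dot{\theta^x}{y}$, some label $\alpha \in [k]$ and some $\alpha$-expansion $x^\alpha$ of $x$ satisfying inequality \eqref{eqn:proof-ineq}. Specializing to a local expansion minimum $x$ then forces the nonnegative quantity on the right-hand side to vanish, because the left-hand side is at most zero at any local minimum. That single equality says exactly that the integer point $x$ already achieves the LP optimum of the perturbed instance, which simultaneously gives global optimality of $x$ in the perturbed instance (every labeling lies in $L(G)$) and LP tightness there, so both claims of the theorem follow together.

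To produce the desired expansion I would analyze the randomized rounding $R(x,y')$ described in the sketch and argue the corresponding inequality in expectation, then invoke existence in the support. That is, with $\bar x = (1-\epsilon)x + \epsilon y'$, a uniformly random $\alpha \in [k]$, and an independent threshold $r\in(0,\gamma)$, reassign $u$ to $\alpha$ exactly when $\bar x_u(\alpha) > r$. The output is automatically an $\alpha$-expansion of $x$, so the task reduces to lower-bounding $\E[\dot{\theta}{x - R(x,y')}]$ by $\epsilon\dot{\theta^x}{x - y'}$. I would split this expectation into a node-cost piece and an edge-cost piece and argue them separately, then tune $\epsilon$ and $\gamma$.

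The node-cost piece is routine once $\gamma$ is tuned so that each vertex is assigned label $i$ with probability tracking $\bar x_u(i)$; the expected node cost is then $\sum_{u,i}\theta_u(i)\bar x_u(i) = (1-\epsilon)\dot{\theta}{x} + \epsilon\dot{\theta}{y'}$, which is already in the correct form because the node costs coincide under $\theta$ and $\theta^x$. The hard part, and the step where the doubling in $w^x$ must appear organically, is the edge-cost analysis. I would split edges by whether $x(u)=x(v)$. For an edge with $x(u)\ne x(v)$ the endpoints already disagree, so the rounding only \emph{uncuts} the edge when both endpoints jump together to $\alpha$, an event controlled by $\min(\bar x_u(\alpha),\bar x_v(\alpha))$. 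For an edge with $x(u)=x(v)$ the edge is currently uncut and becomes cut whenever exactly one endpoint jumps to $\alpha$, an event whose probability is proportional to $|\bar x_u(\alpha)-\bar x_v(\alpha)|$; because two distinct endpoints can each independently trigger such a cut, a factor of $2$ appears naturally, and this is exactly what $w^x(u,v)=2w(u,v)$ on same-label edges absorbs when we recast the bound against $\dot{\theta^x}{\bar x}$. Combining the two pieces and choosing $\epsilon,\gamma$ in the style of \citet{LanSonVij18} and \citet{MakMak20} yields \eqref{eqn:rounding-ineq} and hence the theorem. The main technical risk is ensuring that a single pair $(\epsilon,\gamma)$ works uniformly for both edge classes; I would verify this by rewriting the edge cost of $y'$ under $w^x$ in the form that emerges from the rounding probabilities and checking that the two edge-class bounds assemble with no leftover slack.
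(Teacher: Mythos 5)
Your proposal follows essentially the same route as the paper's proof: the same reduction of both claims to the single equality $\dot{\theta^x}{x}=\min_{y\in L(G)}\dot{\theta^x}{y}$ forced at a local minimum, the same randomized rounding guided by $\bar x=(1-\epsilon)x+\epsilon y'$, and the same edge-by-edge analysis in which the doubled weights on same-label edges absorb the factor-$2$ loss in the cut probability while LP-optimality of $y'$ supplies the closed form $\frac{1}{2}\sum_i|y'_u(i)-y'_v(i)|$ for its edge cost. The plan is correct as stated.
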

%\anote{Bad notation: $\theta_x = (\theta_u, w^x)$ is an example of the confusion caused by using $\theta_u$ for node costs. Here $\theta_z$ could mean two different things!}
%\hnote{I fixed to $\theta^x$ throughout and also defined it explicitly in \eqref{eqn:thetax-defn}; I think it's reasonably clear now}

\begin{proof}
  Let $x$ be any labeling of $G$. We show that there exists an expansion $x^{\alpha}$ of $x$ such that for some $\epsilon > 0$:
\begin{equation}
    \label{eqn:proof-ineq-apdx}
      \langle \theta, x - x^{\alpha} \rangle \ge \epsilon\left(\langle \theta^x, x \rangle - \min_{y\in L(G)} \langle \theta^x,  y\rangle\right).
\end{equation}
This implies that as long as $\langle \theta^x, x \rangle - \min_{y\in L(G)} \langle \theta^x,  y\rangle$ is positive, there exists an expansion move with strictly better objective than $x$. The right-hand-side of \eqref{eqn:proof-ineq-apdx} is always nonnegative, because $x\in L(G)$. 
Therefore, $x$ can only be a local minimum w.r.t. expansion moves if $\langle \theta^x, x \rangle = \min_{y\in L(G)} \langle \theta^x,  y\rangle$. 
Every labeling of $G$ corresponds to a point in $L(G)$, since $M(G) \subset L(G)$, so if $\dot{\theta^x}{x} = \min_{y\in L(G)} \langle \theta^x,  y\rangle$, $x$ must be an optimal labeling in the instance with objective $\theta^x$. 
This equality also implies that a vertex of $M(G)$ attains the optimal objective value for \eqref{eqn:local-lp}, which is the definition of ``tightness'' on an instance. 
So \eqref{eqn:proof-ineq-apdx} dgives both claims of the theorem.

  Let $y' \in \argmin_{y\in L(G)} \langle \theta^x, y\rangle$ be an LP solution to the perturbed instance. To show \eqref{eqn:proof-ineq-apdx}, we design a rounding algorithm $R$ that takes $y'$ and $x$ as input and outputs an expansion move $x^{\alpha}$ of $x$. We show that $R$ satisfies
  \begin{equation}
      \label{eqn:rounding-ineq-apdx}
        \E[\langle\theta, x - R(x,y')\rangle] \ge \epsilon(\langle\theta^x, x - y'\rangle),
  \end{equation}
  which proves \eqref{eqn:proof-ineq-apdx} because it implies there exists some $x^{\alpha}$ in the support of $R(x,y')$ that attains \eqref{eqn:proof-ineq-apdx}.
  
\begin{algorithm}[H]
   \caption{$R(x,y')$}
   \label{alg:rounding_algorithm}
\begin{algorithmic}[1]
   \STATE Fix $0 < \epsilon < 1/k$.
   \STATE Set $x' = \epsilon y' + (1-\epsilon)x$.
   \STATE Choose $i \in \{1,\dots,k\}$ uniformly at random.
   \STATE Choose $r \in (0,1/k)$ uniformly at random.
   \STATE Initialize labeling $x^{\alpha}: V\to [k]$.
   \FOR{each $u \in V$}
        \IF {$x'_u(i) > r$}
            \STATE Set $x^{\alpha}(u) = i$.
        \ELSE
            \STATE Set $x^{\alpha}(u) = x(u)$
        \ENDIF 
   \ENDFOR
   \STATE \textbf{Return} $x^{\alpha}$
\end{algorithmic}
\end{algorithm}

\begin{lemma}[Rounding guarantees]
\label{lemma:rounding-guar}
The labeling $x^{\alpha}$ output by Algorithm \ref{alg:rounding_algorithm} is an expansion of $x$, and it satisfies the following guarantees:
\begin{align*}
    \P[x^{\alpha}(u) = i] &= x'_u(i) &\forall\ u\in V, i\in [k]\\
    \P[x^{\alpha}(u) \ne x^{\alpha}(v)] &\le 2d(u,v) &\forall\ (u,v) \in E : x(u) = x(v)\\
    \P[x^{\alpha}(u) = x^{\alpha}(v)] &= (1-d(u,v)) &\forall\ (u,v) \in E : x(u) \ne x(v),
\end{align*}
where $d(u,v) = \frac{1}{2}\sum_i|x'_u(i) - x'_v(i)|$.
\end{lemma}
\begin{proof}[Proof of Lemma \ref{lemma:rounding-guar} (rounding guarantees)]
The output $x^{\alpha}$ is clearly an $i$-expansion of $x$ for the $i$ chosen in line 3. 

For the guarantees, first, fix $u\in V$ and a label $i\ne x(u)$. 
We output $x^{\alpha}(u) = i$ precisely when $i$ is chosen in line 3, and $0 < r < x'_u(i)$, which occurs with probability $\frac{1}{k}\frac{x'_u(i)}{1/k} = x'_u(i)$ (we used here that $x'_u(i) \le \epsilon < 1/k$ for all $i\ne x(u)$). Now we output $x^{\alpha}(u)=x(u)$ with probability $1-\sum_{j\ne x(u)}\P[x^{\alpha}(u) = j] = 1-\sum_{j\ne x(u)}x'_u(j) = x'_u(x(u))$, since $\sum_{i}x'_u(i) = 1$. This proves the first guarantee.

For the second, consider an edge $(u,v)$ not cut by $x$, so $x(u) = x(v)$. Then $(u,v)$ is cut by $x^{\alpha}$ when some $i \ne x(u)$ is chosen and $r$ falls between $x'_u(i)$ and $x'_v(i)$. This occurs with probability \[\frac{1}{k}\sum_{i\ne x(u)}\frac{\max(x'_u(i), x'_v(i))-\min(x'_u(i), x'_v(i))}{1/k} = \sum_{i\ne x(u)} |x'_u(i) - x'_v(i)| \le 2d(u,v).\]

Finally, consider an edge $(u,v)$ cut by $x$, so that $x(u) \ne x(v)$. Here $x^{\alpha}(u) = x^{\alpha}(v)$ if some $i,r$ are chosen with $r < \min(x'_u(i), x'_v(i))$. We have $r < \min(x'_u(i), x'_v(i))$ with probability $\frac{\min(x'_u(i), x'_v(i))}{1/k}$. Note that this is still valid if $i=x(u)$ or $i=x(v)$, since only one of those equalities can hold.
So we get \[\P[x^{\alpha}(u) = x^{\alpha}(v)] = \frac{1}{k}
\sum_{i} \frac{\min(x'_u(i),x'_v(i))}{1/k}  = \frac{1}{2}\left(\sum_i x'_u(i) + x'_v(i) - |x'_u(i) - x'_v(i)|\right) = 1 - d(u,v),
\]
where we used again that $\sum_i x'_u(i) = 1$.
\end{proof}  
Algorithm \ref{alg:rounding_algorithm} is very similar to the rounding algorithm from \citet{LanSonVij18}, essentially just using different constants to give a simplified analysis.
The algorithm used in \citet{LanSonVij18} was itself a simple modification of the $\epsilon$-close rounding from \citet{AngMakMak17}.

With these guarantees in hand, we can now prove \eqref{eqn:rounding-ineq-apdx}. Let $x^{\alpha} = R(x,y')$. Let $E^x = \{(u,v) \in E : x(u) \ne x(v)\}$ be the set of edges cut by $x$. Recall that $\theta_{uv}(i,j) = w_{uv}\mathbb{I}[i\ne j]$. Then we have:
\begin{align*}
    \E[\langle\theta, x - x^{\alpha}\rangle] = \sum_u\theta_u(x(u))\P[x^{\alpha}(u) \ne x(u)] - \sum_u\sum_{i\ne x(u)}\theta_u(i)\P[x^{\alpha}(u) = i] &+ \sum_{uv \in E^x}w_{uv}\P[x^{\alpha}(u) = x^{\alpha}(v)]\\
    &- \sum_{uv \in E\setminus E^x}w_{uv}\P[x^{\alpha}(u) \ne x^{\alpha}(v)].
\end{align*}
Applying Lemma \ref{lemma:rounding-guar}, we obtain:
\begin{align}
\E[\langle\theta, x - x^{\alpha}\rangle] &\ge \sum_u \theta_u(x(u))(1-x'_u(x(u))) - \sum_{u}\sum_{i\ne x(u)}\theta_u(i)x'_u(i) + \sum_{uv\in E^x}w_{uv}(1-d(u,v)) - \sum_{uv \in E\setminus E^x} 2w_{uv}d(u,v)\nonumber\\
&= \sum_u\theta_u(x(u)) + \sum_{uv\in E^x}w^x_{uv} - \sum_{u}\sum_i \theta_u(i)x'_u(i) - \sum_{uv\in E}w^x_{uv}d(u,v)\label{eqn:perturbed-objective}.
\end{align}
Here we are using the formula for $w^x_{uv}$ given by \eqref{eqn:wx-defn-apdx}: $w^x_{uv} = w_{uv}$ if $(u,v)$ is in $E^x$, and $2w_{uv}$ otherwise.

Because $x$ is a vertex of $M(G)$, the node variables $x_u(i)$ are either 0 or 1. Then there is only one setting of $x_{uv}(i,j)$ that satisfies the marginalization constraints. So the edge cost paid by $x$ on each edge is proportional to $\frac{1}{2}\sum_{uv}|x_u(i) - x_v(i)|$, since this is 1 if $x$ labels $u$ and $v$ differently and 0 otherwise. Therefore, 
\[
\sum_{uv}\sum_{i,j}\theta^x_{uv}(i,j)x_{uv}(i,j) = \sum_{uv}\frac{w^x_{uv}}{2}\sum_i|x_u(i) - x_v(i)|.
\]
The following proposition says we can also rewrite the edge cost paid by the LP solution $y'$ in this way.
 \begin{proposition*}
 In uniform metric labeling, there is a closed form for the optimal edge cost that only involves the node variables. That is, fix arbitrary node variables $z_u(i)$ and $z_v(j)$. Then the value of 
\begin{alignat*}{2}
    \min_{z_{uv}}\qquad &\sum_{i,j}\mathbb{I}[i\ne j]z_{uv}(i,j)\\
    \text{subject to } &\sum_j z_{uv}(i,j) = z_u(i)\\
    & \sum_{i} z_{uv}(i,j) = z_v(j)\\
    & z_{uv}(i,j) \ge 0
\end{alignat*}
is equal to $\frac{1}{2}\sum_{i}|z_u(i) - z_v(i)|$ \citep{archer2004approximate, LanSonVij18}. This fact is used to prove that the local LP relaxation is equivalent to the \emph{metric} LP relaxation for uniform metric labeling \citep{archer2004approximate,LanSonVij18}.
\end{proposition*}

Because $y'$ is an optimal solution to \eqref{eqn:local-lp} for objective $\theta^x$, $y'$ pays the minimum edge cost consistent with its node variables, since otherwise it cannot be optimal. Then the above proposition implies that:
\[
\sum_{uv}\sum_{i,j}\theta^x_{uv}(i,j)y'_{uv}(i,j) = \sum_{uv}\frac{w^x_{uv}}{2}\sum_i|y'_u(i) - y'_v(i)|.
\]
Since $x'_{uv}(i,j) = \epsilon y'_{uv}(i,j) + (1-\epsilon)x_{uv}(i,j)$, and $d(u,v)$ is convex,
\[
d(u,v) = \frac{1}{2}\sum_i|x'_u(i) - x'_v(i)| \le \frac{\epsilon}{2}\sum_i|y'_u(i) - y'_v(i)| + \frac{1-\epsilon}{2}\sum_i|x_u(i) - x_v(i)|
\]

Using this, the definition of $x'$, and the closed forms for the edge cost of $y'$ and $x$, we can simplify \eqref{eqn:perturbed-objective} to:
\begin{alignat*}{2}
\E[\dot{\theta^x}{x-x^{\alpha}}] &\ge \dot{\theta^x}{x} &&- \left[(1-\epsilon)\sum_u\theta_u(x(u)) + \epsilon\sum_u\sum_i\theta_u(i)y'_u(i) + (1-\epsilon)\sum_{uv}\frac{w^x_{uv}}{2}\sum_i|x_u(i) - x_v(i)| \right.\\
&&&+\left. \epsilon\sum_{uv}\frac{w^x_{uv}}{2}\sum_i|y_u(i) - y_v(i)|\right]\\
&= \dot{\theta^x}{x}&& - \left[(1-\epsilon)\dot{\theta^x}{x} + \epsilon\dot{\theta^x}{y'}\right]\\
&= \epsilon\dot{\theta^x}{x &&- y'},
\end{alignat*}
which is what we wanted to show. This analysis implies that for any expansion minimum $x$, (i) $x$ is a MAP solution to the instance $\theta^x$ and (ii) the local LP relaxation \eqref{eqn:local-lp} is tight on the instance $\theta^x$. Point (ii) is crucial for the correctness of our algorithm in Section \ref{sec:verifying}. However, in the next section we give a simpler proof of (i) that does not use the local LP relaxation.
\end{proof}

\subsection{Combinatorial proof of Theorem \ref{thm:exactpert} part (i)}
Here, we give a simpler proof for the first claim of Theorem \ref{thm:exactpert}, that a solution $x$ returned by $\alpha$-expansion is the optimal labeling in the instance with objective $\theta^x$.
However, the extra guarantee of Theorem \ref{thm:exactpert}, that the local LP relaxation is tight on the instance with objective $\theta^x$, was crucial to the correctness of our algorithm in Section \ref{sec:verifying}.
\begin{theorem*}
Consider an input instance $\theta$ with Potts pairwise potentials and weights $w$, and let the labeling $x$ be a local minimum for $\theta$ with respect to expansion moves. Define perturbed weights ${w^x: E\to \mathbb{R}_+}$ as
\begin{equation}
w^x_{uv} = \begin{cases}
    w_{uv} & x(u) \ne x(v)\\
    2w_{uv} & x(u) = x(v),
\end{cases}
\end{equation}
and let 
\begin{equation}
    \theta^x_{uv}(i,j) = w^x_{uv}\mathbb{I}[i\ne j]
\end{equation}
be the pairwise Potts energies corresponding to the weights $w^x$.
Then $x$ is a global minimum in the instance with objective vector $\theta^x = (\theta_u : u;\ \theta_{uv}^x : uv)$. 
This is the Potts model instance with the same node costs $\theta_u(i)$ as the original instance, but new pairwise energies $\theta_{uv}^x(i,j)$ defined using the perturbed weights $w^x$.
\end{theorem*}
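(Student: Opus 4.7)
My plan is to give a purely combinatorial proof, bypassing the LP relaxation and randomized rounding used in the stronger statement above. I would fix an arbitrary labeling $y$ of $G$ and show directly that $\langle \theta^x, x\rangle \le \langle \theta^x, y\rangle$, which establishes the global optimality of $x$ in the perturbed instance. For each label $\alpha \in [k]$, I would use the specific $\alpha$-expansion $x^\alpha$ of $x$ defined by $x^\alpha(u) = \alpha$ if $y(u) = \alpha$ and $x^\alpha(u) = x(u)$ otherwise. This is a valid $\alpha$-expansion because any node already labeled $\alpha$ by $x$ retains that label. Local optimality of $x$ then yields $\langle \theta, x\rangle \le \langle \theta, x^{\alpha}\rangle$ for every $\alpha\in[k]$, and I would sum these $k$ inequalities.

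The heart of the argument is careful bookkeeping on the right-hand side of the summed inequality. On the node side, each vertex $u$ sees its label changed by $x^{\alpha}$ only for $\alpha = y(u)$, contributing $\theta_u(y(u)) - \theta_u(x(u))$; so the total node contribution is $\sum_u[\theta_u(y(u)) - \theta_u(x(u))]$. On the edge side, I would partition $E$ into four sets $S_{00}, S_{01}, S_{10}, S_{11}$ indexed by the pair $(\mathbb{I}[x(u)\ne x(v)],\, \mathbb{I}[y(u)\ne y(v)])$ and evaluate each edge's total contribution by case analysis. A direct computation gives: edges in $S_{00}$ contribute $0$; edges in $S_{10}$ contribute exactly $-w_{uv}$ (the single expansion with $\alpha = y(u) = y(v)$ merges the two endpoints); edges in $S_{01}$ contribute $w_{uv}(\mathbb{I}[y(u)\ne x(u)] + \mathbb{I}[y(v)\ne x(v)]) \le 2 w_{uv}$; and edges in $S_{11}$ contribute $w_{uv}(\mathbb{I}[y(u)\ne x(v)] + \mathbb{I}[y(v)\ne x(u)] - 2) \le 0$.

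Substituting these bounds into $0 \le \sum_{\alpha}\langle \theta, x^{\alpha} - x\rangle$ yields
\[
\sum_u [\theta_u(x(u)) - \theta_u(y(u))] + \sum_{(u,v)\in S_{10}} w_{uv} \;\le\; 2\sum_{(u,v)\in S_{01}} w_{uv}.
\]
Adding $\sum_{(u,v)\in S_{11}} w_{uv}$ to both sides, and using that $w^x_{uv} = 2w_{uv}$ exactly when $x(u)=x(v)$ (so $w^x_{uv}\mathbb{I}[y(u)\ne y(v)]$ equals $2w_{uv}$ on $S_{01}$ and $w_{uv}$ on $S_{11}$, while $w^x_{uv}\mathbb{I}[x(u)\ne x(v)]$ equals $w_{uv}$ on $S_{10}\cup S_{11}$), rearranges the display directly to $\langle\theta^x, x\rangle \le \langle\theta^x, y\rangle$. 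Since $y$ was arbitrary, $x$ is globally optimal for $\theta^x$.

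The main obstacle will be executing the edge-by-edge case analysis carefully: getting signs and counts right when the two endpoints of an edge are labeled differently by $x$ and by $y$ requires tracking several indicators at once. The quantitative sharpening over Boykov's 2-approximation is concentrated in the $S_{11}$ case; Boykov's original argument would effectively charge $w_{uv}$ on each such edge, whereas the observation here is that each $S_{11}$ edge contributes nonpositively to the summed inequality. This is exactly what allows the perturbed weight $w^x_{uv}$ to remain $w_{uv}$ (rather than $2w_{uv}$) on edges cut by $x$, and is the source of the tight factor in the definition of the perturbed instance.
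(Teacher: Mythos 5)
Your proposal is correct and is essentially the paper's own combinatorial proof of this statement (given in the appendix as the ``combinatorial proof of part (i)''): both sum the local-optimality inequalities over the expansions $x^{\alpha}$ of $x$ toward $y$, and your four-way partition $S_{00},S_{01},S_{10},S_{11}$ is just a per-edge reorganization of the paper's bookkeeping via $E_x\setminus E_y$, $E_y\setminus E_x$, and the boundary sets $B^{\alpha}$, with the same key observations that each $S_{10}$ edge is uncut by exactly one expansion and each $S_{01}$ edge is charged at most twice. The only thing you give up relative to the paper's main (LP-rounding) argument is the additional claim that the local LP relaxation is tight on the perturbed instance, which is not part of the statement as given.
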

\begin{proof}
We'll show that if some assignment $y$ obtains $\dot{\theta^x}{y} < \dot{\theta^x}{x}$, there exists an expansion move
$x^{\alpha}$ of $x$ with $\dot{\theta}{x^{\alpha}} < \dot{\theta}{x}$. Consequently, when $x$ is optimal with
respect to expansion moves, it is also the global optimal assignment
in the instance with objective $\theta^x$. Assume such a $y$ exists and define $V^{\alpha} =
\{u \in V | y(u) = \alpha\}.$ This is the set of points labeled $\alpha$ by $y$. The sets $(V^1,\ldots, V^k)$ form a partition of $V$. 
For each $\alpha \in [k]$, define the
expansion $x^{\alpha}$ of $x$ towards $y$ as:
\begin{equation*}
  x^{\alpha}(u) = \begin{cases}
    \alpha & u \in V^{\alpha}\\
    x(u) & \mbox{otherwise.}
  \end{cases}
\end{equation*}

We will show:
\begin{equation}
\label{eqn:lemma}
    \sum_{\alpha} \left(\dot{\theta}{x} - \dot{\theta}{x^{\alpha}}\right) \ge \dot{\theta^x}{x} - \dot{\theta^x}{y}    
\end{equation}
This immediately gives the
result: if $\dot{\theta^x}{y} < \dot{\theta^x}{x}$, then at least one term in the sum on the
left-hand-side must be positive, and this corresponds to an expansion
$x^{\alpha}$ of $x$ with better objective in the original instance.

  Consider a single term $\dot{\theta}{x}
  - \dot{\theta}{x^\alpha}$ on the left-hand-side of \eqref{eqn:lemma}. The difference in node cost
  terms is precisely $\sum_{u\in V^{\alpha}}\theta_u(x(u)) -
  \theta_u(x^{\alpha}(u))$, since on all $v \in V\setminus
  V^{\alpha}$, $x^{\alpha}(v) = x(v)$. This is equal to $\sum_{u\in V^{\alpha}}\theta_u(x(u)) - \theta_u(y(u))$, so the sum over $\alpha$ gives the difference in node cost between $x$ and $y$:
\begin{equation}\label{eqn:nodes-equal}
  \sum_{\alpha}\sum_{u\in V^{\alpha}}\theta_u(x(u)) - \theta_u(x^{\alpha}(u)) = \sum_{u\in V}\theta_u(x(u)) - \theta_u(y(u)).
  \end{equation}
  For any assignment $z$, let
  $E_z \subset E$ be the set of edges $(u,v)$ separated by $z$. 
   Then we can write the difference in edge costs between $x$ and
  $x^\alpha$, with the original weights $w_{uv}$, as
  \begin{equation*}
    \sum_{uv\in E_x \setminus E_{x^\alpha}}w_{uv} - \sum_{uv \in E_{x^\alpha}\setminus E_x}w_{uv},
  \end{equation*}
  and the edge cost difference between $x$ and $y$ with weights $w^x$ as:
  \begin{equation*}
     \sum_{uv\in E_x \setminus E_y}w_{uv} - \sum_{uv \in E_y\setminus E_x}2 w_{uv},
  \end{equation*}
  where we used the definition of $w^x_{uv}$.
  Then what remains is to show:
  \[
   \sum_{\alpha}\left(\sum_{uv\in E_x \setminus E_{x^\alpha}}w_{uv} - \sum_{uv \in E_{x^\alpha}\setminus E_x}w_{uv}\right) \ge \sum_{uv\in E_x \setminus E_y}w_{uv} - \sum_{uv \in E_y\setminus E_x}2 w_{uv}.
  \]
  Define $B^{\alpha}$ to be the set of edges with exactly one endpoint
  in $V^{\alpha}$ i.e., $B^{\alpha} = \{(u,v) \in E : |\{u,v\} \cap
  V^{\alpha}| = 1\}$. For all $(u,v) \in B^{\alpha}$, $y(u) \ne y(v)$,
  and either $y(u) = \alpha$ or $y(v) = \alpha$.  
  
  Let $(u,v) \in E_x \setminus E_y$. Because $y(u) = y(v)$, the edge $(u,v)$ appears in \emph{exactly one} of the
  $E_x\setminus E_{x^{\alpha}}$. That is, $y(u) = y(v) = \alpha$, so $x^{\alpha}$ does not cut $(u,v)$, and $x^{\beta}$ cuts $(u,v)$ for all $\beta\ne\alpha$. This implies
  \begin{equation}
  \label{eqn:cut-by-x}
  \sum_\alpha\sum_{uv\in E_x\setminus E_{x^{\alpha}}}w_{uv} \ge \sum_{uv \in E_x\setminus E_y}w_{uv}
  \end{equation}
  If $x^{\alpha}$ separates an edge $(u,v)$ that
  is not separated by $x$, exactly one endpoint of $(u,v)$ is in
  $V^{\alpha}$, since otherwise both endpoints would have been assigned label
  $\alpha$. Thus $E_{x^\alpha}\setminus E_x \subset B_\alpha \setminus
  E_x$. This implies
\begin{equation}
\label{eqn:cut-by-x-alpha}
  \sum_\alpha\sum_{uv\in E_{x^{\alpha}}\setminus E_x}w_{uv} = \sum_{\alpha}\sum_{uv \in B^{\alpha}\setminus E_x}w_{uv} = 2\sum_{uv\in E_y\setminus E_x}w_{uv},    
\end{equation}
where the last equality is because each edge in $E_y$ appears in two $B^{\alpha}$.
Combining \eqref{eqn:cut-by-x} and \eqref{eqn:cut-by-x-alpha}, we obtain:
\begin{equation}\label{eqn:edge-ineq}
  \sum_{\alpha}\left(\sum_{uv\in E_x \setminus E_{x^\alpha}}w_{uv} - \sum_{uv \in E_{x^\alpha}\setminus E_x}w_{uv}\right) \ge \sum_{uv\in E_x \setminus E_y}w_{uv} - \sum_{uv \in E_y\setminus E_x}2 w_{uv},
\end{equation}
which is what we wanted. Combining \eqref{eqn:nodes-equal} and \eqref{eqn:edge-ineq}, we obtain \eqref{eqn:lemma}.
\end{proof}

\subsection{Proof of Lemma \ref{lem:theta-x-enough}}
\begin{proof}[Proof of lemma \ref{lem:theta-x-enough}]
Recall that $\mathcal{S}(\theta)$ is defined as the set of $x$ for which there exists $\theta' \in \mathcal{I}(\theta)$ such that $x$ is a MAP solution to the instance $\theta'$. We want to show that $\mathcal{S}(\theta)$ can also be written as:
\begin{align*}
    \mathcal{S}(\theta) = \{x : x &\text{ a MAP solution to the instance } \theta^x \text{ defined by \eqref{eqn:wx-defn} and \eqref{eqn:thetax-defn}}\}
\end{align*}
To do show, we simply show that if $x$ is a MAP solution for some $\theta'\in \mathcal{I}(\theta)$, then $x$ is also the MAP solution to the instance $\theta^x$. This is effectively because $\theta^x$ is the ``best possible'' perturbation for $x$ that is contained in $\mathcal{I}(\theta)$.
Fix $\theta' \in \mathcal{I}(\theta)$ for which $x$ is a MAP solution. Then for all labelings $y \ne x$,
$\dot{\theta'}{y} \ge \dot{\theta'}{x}$. In particular,
\[
\sum_{u}\theta'_u(y(u)) + \sum_{uv}\theta'_{uv}(y(u),y(v)) \ge \sum_{u}\theta'_u(x(u)) + \sum_{uv}\theta'_{uv}(x(u),x(v)).
\]
Because we assume throughout that $\theta_{uv}(i,j) = w_{uv}\mathbb{I}[i \ne j]$ (i.e., that the input instance is a Potts model), the definition of $\mathcal{I}(\theta)$ (equation \ref{eqn:idefn}) implies
that every instance in $\mathcal{I}(\theta)$ is a Potts model. So let $w'$ be the weights of the instance $\theta'$. Additionally, recall that the definition of $\mathcal{I}(\theta)$ implies that $\theta'_u(i) = \theta_u(i)$ for all $(u,i)$. Then the inequality above becomes:
\[
\sum_{u}\theta_u(y(u)) - \sum_u\theta_u(x(u)) + \sum_{\substack{uv: y(u) \ne y(v) \\ x(u) = x(v)}}w'_{uv} -  \sum_{\substack{uv: x(u) \ne x(v) \\ y(u) = y(v)}}w'_{uv} \ge 0
\]

The definition of $\mathcal{I}(\theta)$ requires that for all $(u,v)$, $w_{uv} \le w'_{uv} \le 2w_{uv}$. Together with the previous inequality, this implies 
\[
\sum_{u}\theta_u(y(u)) - \sum_u\theta_u(x(u)) + \sum_{\substack{uv: y(u) \ne y(v) \\ x(u) = x(v)}}2w_{uv} -  \sum_{\substack{uv: x(u) \ne x(v) \\ y(u) = y(v)}}w_{uv} \ge 0.
\]
By definition of the perturbed weights $w^x_{uv}$ \eqref{eqn:wx-defn-apdx}, we have
\[
\sum_{u}\theta_u(y(u)) - \sum_u\theta_u(x(u)) + \sum_{\substack{uv: y(u) \ne y(v) \\ x(u) = x(v)}}w^x_{uv} -  \sum_{\substack{uv: x(u) \ne x(v) \\ y(u) = y(v)}}w^x_{uv} \ge 0,
\]
which is equivalent to:
\[
\dot{\theta^x}{y} \ge \dot{\theta^x}{x}.
\]
Because $y$ was arbitrary, this implies $x$ is a MAP solution to the instance $\theta^x$.
\end{proof}

\section{Comparing \eqref{eqn:maximization} and \eqref{eqn:naive-alg}}
\label{apdx:naive-bound}
In this section, we expound on the relationship between \eqref{eqn:maximization} and \eqref{eqn:naive-alg}, the bound obtained directly from $\alpha$-expansion's objective approximation guarantee.
In particular, we show that any $x$ that is feasible for \eqref{eqn:maximization} is also feasible for \eqref{eqn:naive-alg}. While we solve the relaxation \eqref{eqn:relaxed-alg} of \eqref{eqn:maximization} in practice, this gives some intuition for why \eqref{eqn:relaxed-alg} gives much tighter bounds than \eqref{eqn:naive-alg}.

We have two ways of characterizing the set of labelings $x$ that are local optima w.r.t. expansion moves. The first, guaranteed by \citet{BoyVekZab01}, is that all such $x$ satisfy
\begin{equation}
\label{eqn:apdx-boykov-guar}
\dot{\theta}{x} \le \dot{\theta}{x^*} + \sum_{uv\in E}w_{uv}\mathbb{I}[x^*(u)\ne x^*(v)],
\end{equation}
where $x^*$ is a MAP solution. That is, the ``extra'' objective paid by $x$ is at most the edge cost paid by a MAP solution.
The second, guaranteed by Theorem \ref{thm:localglobal}, is that $x$ is the MAP solution in the instance with objective $\theta^x$ (i.e., $x \in \mathcal{S}(\theta)$).
We now show that any labeling $x$ that is a MAP solution in the instance with objective $\theta^x$ also satisfies \eqref{eqn:apdx-boykov-guar}, but the converse is not true. This implies that the feasible region of \eqref{eqn:maximization} is strictly smaller than that of \eqref{eqn:naive-alg}.

\begin{proposition*}
Let $x$ be a labeling that is optimal in the instance with objective $\theta^x$, and let $x^*$ be a MAP solution to the original instance, with objective $\theta$. Then:
\[
\dot{\theta}{x} \le \dot{\theta}{x^*} + \sum_{uv\in E}w_{uv}\mathbb{I}[x^*(u)\ne x^*(v)],
\]
\end{proposition*}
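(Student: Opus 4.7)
The plan is to exploit two properties of the perturbed objective $\theta^x$. First, as already observed in Section~\ref{sec:verifying}, the identity $\dot{\theta^x}{x} = \dot{\theta}{x}$ holds by construction: the weights $w^x(u,v)$ only differ from $w(u,v)$ on edges where $x(u)=x(v)$, and on such edges the Potts indicator $\mathbb{I}[x(u)\ne x(v)]$ is zero, so the doubling is invisible to $x$ itself. Second, by hypothesis $x$ is a global minimum of the perturbed instance, so $\dot{\theta^x}{x}\le \dot{\theta^x}{y}$ for every labeling $y$, and in particular for $y=x^*$.

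Chaining these two facts, I would write
\[
\dot{\theta}{x} \;=\; \dot{\theta^x}{x} \;\le\; \dot{\theta^x}{x^*}.
\]
It then remains to bound $\dot{\theta^x}{x^*}$ by a quantity involving only $\theta$ and $w$. Splitting the objective into node and edge contributions gives
\[
\dot{\theta^x}{x^*} \;=\; \sum_{u\in V}\theta_u(x^*(u)) \;+\; \sum_{(u,v)\in E}w^x(u,v)\,\mathbb{I}[x^*(u)\ne x^*(v)],
\]
and from the definition \eqref{eqn:wx-defn-apdx} we have the pointwise bound $w^x(u,v)\le 2w(u,v)$. Therefore the edge sum is at most $2\sum_{(u,v)\in E} w_{uv}\mathbb{I}[x^*(u)\ne x^*(v)]$.

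Combining these pieces yields
\[
\dot{\theta}{x} \;\le\; \sum_{u\in V}\theta_u(x^*(u)) + 2\sum_{(u,v)\in E}w_{uv}\mathbb{I}[x^*(u)\ne x^*(v)] \;=\; \dot{\theta}{x^*} + \sum_{(u,v)\in E}w_{uv}\mathbb{I}[x^*(u)\ne x^*(v)],
\]
which is exactly \eqref{eqn:apdx-boykov-guar}. There is no real obstacle here; the argument is a short three-step manipulation, and the only thing to be careful about is tracking which edges have $w^x=w$ versus $w^x=2w$. The key conceptual point, which explains why \eqref{eqn:maximization} carves out a strictly smaller feasible region than \eqref{eqn:naive-alg}, is that the Boykov--Veksler--Zabih bound follows from Theorem~\ref{thm:localglobal} by \emph{discarding} the tightness of the LP on the perturbed instance and using only the crude pointwise inequality $w^x\le 2w$; any information about which edges actually get doubled is thrown away.
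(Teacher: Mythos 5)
Your proof is correct and is essentially identical to the paper's: both use the identity $\dot{\theta^x}{x}=\dot{\theta}{x}$, the optimality of $x$ under $\theta^x$ applied to $y=x^*$, and the pointwise bound $w^x_{uv}\le 2w_{uv}$ to control $\dot{\theta^x}{x^*}$. No gaps.
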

\begin{proof}
Because $x$ is optimal for $\theta^x$, we have $\dot{\theta^x}{x} \le \dot{\theta^x}{x^*}$. Recall from the definitions of $w^x_{uv}$ and $\theta^x_{uv}(i,j)$ (\eqref{eqn:wx-defn-apdx} and \eqref{eqn:thetax-defn-apdx}) that $\dot{\theta^x}{x} = \dot{\theta}{x}$. We also have that
\begin{align*}
\dot{\theta^x}{x^*} = \sum_u\theta_u(x^*(u)) + \sum_{uv}w^x_{uv}\mathbb{I}[x^*(u)\ne x^*(v)] &\le \sum_u\theta_u(x^*(u)) + 2\sum_{uv}w_{uv}\mathbb{I}[x^*(u)\ne x^*(v)]\\
&= \dot{\theta}{x^*} + \sum_{uv}w_{uv}\mathbb{I}[x^*(u)\ne x^*(v)].
\end{align*}
Here we used that $w^x_{uv} \le 2w_{uv}$ for all $(u,v)\in E$.
Therefore, $\dot{\theta}{x} \le \dot{\theta}{x^*} + \sum_{uv\in E}w_{uv}\mathbb{I}[x^*(u)\ne x^*(v)]$.
\end{proof}
Conversely, not all $x$ satisfying \eqref{eqn:apdx-boykov-guar} are optimal in the instance with objective $\theta^x$. We now construct a simple example.

\paragraph{Example where \eqref{eqn:maximization} is much tighter than \eqref{eqn:naive-alg}.} Let $k=4$ and consider a graph $G = (V,E)$ with two nodes $s$ and $t$, and one edge $(s,t)$. Let $w_{st} = 1$.
For the node costs, Set $\theta_s(0) = 0$, $\theta_s(1) = \epsilon$, and $\theta_s(2) = \theta_s(3) = \infty$. Set $\theta_t(0) = \theta_t(1) = \infty$, $\theta_t(2) = \epsilon$, $\theta_t(3) = 0$. The MAP solution $x^*$ clearly labels $s$ with label 0 and $t$ with label 3, for an objective of 1. Now consider the solution $x$ that labels $s$ with label 1 and $t$ with label 2, for an objective of $1 + 2\epsilon$. For this $x$, because $x$ cuts the only edge, $\theta^x = \theta$ (see \eqref{eqn:wx-defn-apdx}). Therefore, $x$ is not optimal in the instance with objective $\theta^x$, so it is not feasible for \eqref{eqn:maximization}. However, 
\[
1 + 2\epsilon \le \dot{\theta}{x^*} + \sum_{uv\in E}w_{uv}\mathbb{I}[x^*(u)\ne x^*(v)] = 2
\]
for $\epsilon < 1/2$. Therefore, $x$ is feasible for \eqref{eqn:naive-alg}.
The Hamming distance between $x$ and $x^*$ is 1.0---$x$ agrees with $x^*$ on 0 nodes---so if we run \eqref{eqn:naive-alg} to bound the Hamming error on this instance, we obtain a bound of 1.0.
On the other hand, \eqref{eqn:maximization} returns a Hamming distance bound of 0 for this instance, correctly indicating that $\alpha$-expansion always recovers $x^*$ regardless of the initialization. This is because any optimal $x$ must cut $(s,t)$, since otherwise it has infinite objective, and any $x$ that cuts $(s,t)$ has $\theta^x = \theta$, and $x^*$ is the only optimal labeling for objective $\theta$. Hence $x^*$ is the only feasible point of \eqref{eqn:maximization} for this instance.

\section{Model details}
\label{apdx:model-details}
In this section, we give more details on the models used for our experiments in Section \ref{sec:exps}. 
These models are similar to the ones studied in \citet{LanSonVij19}.
There are two types of models: object segmentation and stereo vision.

\subsection{Object segmentation}
We use the object segmentation models from \citet{shotton2006textonboost}, which were also studied by \citet{alahari2010dynamic} in the context of graph cut methods. These models are available as part of the OpenGM 2 benchmark \citep{kappes2015comparative}\footnote{All OpenGM 2 benchmark models are accessible at \url{http://hciweb2.iwr.uni-heidelberg.de/opengm/index.php?l0=benchmark}}. 
In these models, $G$ is a grid with one vertex per pixel and has edges connecting adjacent pixels.
The node costs $\theta_u(i)$ are set based on a learned function of shape, color, and location features. Similarly, the edge weights are set using \emph{contrast-sensitive} features:
\[
w_{uv} = \eta_1\exp\left(-\frac{||I(u) - I(v)||_2^2}{2\sum_{p,q}||I(p) - I(q)||_2^2}\right) + \eta_2,
\]
where $\eta = (\eta_1, \eta_2)$, $\eta \ge 0$ are learned parameters, and $I(u)$ is the vector of RGB values for pixel $u$ in the image.
\citet{shotton2006textonboost} learn the parameters for the node and edge potentials using a shared boosting method.
Each object segmentation instance has 68,160 nodes (the images are {\tt 213 x 320}) and either $k=5$ or $k=8$ labels.
As noted in \citet{kappes2015comparative}, the MRFs used in practice increasingly use potential functions that are learned from data, rather than set by hand.
In our experimental results, we found that both the objective gap and Hamming distance bounds were very good for these instances (in comparison to the stereo examples, which have ``hand-set'' potentials).
Do the learning dynamics automatically encourage solutions to perturbed instances to be close to solutions of the original instance?
Understanding the relationship between learning and this ``stability'' property is an interesting direction for future work.

\subsection{Stereo Vision}
In these models, the weights $w_{uv}$ and costs $\theta_u(i)$ are set ``by hand'' according to the model from \citet{tappen2003comparison}. Given two images taken from slightly offset locations, the goal is to estimate the depth of every pixel in one of the images. This can be done by estimating, for each pixel, the disparity between the two images, since the depth is inversely proportional to the disparity. In the \citet{tappen2003comparison} model, the node costs are set using the sampling-invariant technique from \citet{birchfield1998pixel}. These costs are similar to
\[
\theta_u(i) = (I_L(u) - I_R(u-i))^2,
\]
where $I_L$ and $I_R$ are the pixel intensities in the left and right images. If node $u$ corresponds to pixel location $(h,w)$, we use $u-i$ to represent the pixel in location $(h,w-i)$. So this cost function measures how likely it is that the pixel at location $u$ in the left image corresponds to the pixel at location $u-i$ in the right image. The Birchfield-Tomasi matching costs are set using a correction to this expression that accounts for image sampling.
In the Tappen and Freeman model, the weights $w_{uv}$ are set as:
\begin{equation*}
  w_{uv} = \begin{cases}
    P \times s & |I(u) - I(v)| < T\\
    s & \text{otherwise},
  \end{cases}
\end{equation*}
where $P,T,$ and $s$ are the parameters of the model, and $I(u)$ is the intensity of pixel $u$ in one of the input images to the stereo problem (in our experiments, we use $I_L$, the left image). These edge weights charge more for separating pixels with similar intensities, since nearby pixels with similar intensities are likely to correspond to the same object, and therefore be at the same depth.
In our experiments, we follow \citet{tappen2003comparison} and set
$s=50$, $P = 2$, $T = 4$. In our experiments, we used images from the Middlebury stereo dataset \citep[see, e.g.,][]{scharstein2014high}. We used a downscaled version of the {\tt tsukuba} image that was {\tt 120 x 150}, and had $k=7$. Our {\tt venus} model used the full-size image, which is {\tt 383 x 434}, and has $k=5$. For {\tt plastic}, we again used a downscaled, {\tt 111 x 127} image with $k=5$. The large size of the {\tt venus} image, in particular, shows that our verification algorithm is tractable to run even on fairly large problems.

\end{document}